\newtheorem{te}{Theorem}
 \newtheorem{pr}{Example}[section]
\def\mathbi#1{\textbf{\em #1}}
\begin{document}

\noindent

 \title[$3D$ geometric  moment invariants]{ 3D  geometric  moment invariants from the point of view of the classical  invariant theory }

\author{L. Bedratyuk}\address{Khmelnitskiy national university, Instytus'ka, 11,  Khmelnitskiy, 29016, Ukraine}

\begin{abstract}  
The aim of this paper is to clear up the problem of the connection between the 3D geometric moments invariants and the invariant theory, considering a problem of  describing  of the 3D geometric moments invariants as a problem of the classical invariant theory.
Using the remarkable fact that the groups  $SO(3)$ and $SL(2)$ are   locally isomorphic, we reduced the problem of  deriving 3D geometric moments invariants  to the well-known problem of the classical invariant theory.
 We give a precise statement of  the 3D geometric invariant moments computation, introducing the  notions of the algebras of simultaneous 3D geometric moment invariants, and   prove that   they are isomorphic  to the algebras of joint  $SL(2)$-invariants of several  binary forms. To simplify the calculating of the invariants we  proceed from an action of Lie group $SO(3)$ to  equivalent action of the  Lie algebra $\mathfrak{sl}_2$. The author hopes that the results will be useful to the researchers in the
fields of image analysis and pattern recognition.
\end{abstract}

\maketitle

\section{Introduction}

	The issue of the 3D geometric moments is a 
generalization of  the 2D geometric moment invariants   which are widely  used as  global  feature descriptors in the different applications for pattern recognition and  image analysis. 
 Notice, that by  invariance we mean the  invariance with respect to translations, uniform scaling and rotations. In nowadays, the interest to the usage of the 3D moment invariants is stimulated by the rapid growth of the 3D technologies, \cite{PMCA}-\cite{KS}.

  For the first time, the 3D moment invariants of the second order were derived  in the paper  \cite{SH}.  In \cite{LD}, Lo and  Don  found  twelve invariants of the third order, but as it was shown in  \cite{S3d} there are several interdepended among them. In  the book \cite{FSB}, the author derived 13  invariants and stated that they generate all 3D  geometric moments of the third order. Finally, in   \cite {TST} a set of  one 1185 invariants up to  order 16 was presented, but  these invariants do not form a minimal generating system. However, finding a minimal generating system of the 3D geometric moment invariants  still remains an open problem. This kind of problems turn out to be a purely algebraic questions which were  studied widely in the 19th century.
	
Today, there exists a huge massive of the literature on the 3D geometric moments invariants, but a big amount of it is devoted to the application
of the invariants, along with the different ways of their constructions which sometimes are rather elegant and ingenious.
 For instance, the methods of the quantum mechanics used in   \cite{LD}, \cite{S3d} and \cite{BH} are very impressive.
	
	But, those methods  based on the rotation group $SO(3)$  are quite complicated and are not adapted well for the invariants calculations. 
In this paper, we propose to proceed from the usage of the $SO(3)$ group to the usage of its  locally isomorphic group $SL(2)$.
As far as the Lie algebras $\mathfrak{so}_3$  and $\mathfrak{sl}_ 2$ are isomorphic, the problem of finding of  $SO(3)$-invariants is equivalent to the problem of finding of $SL(2)$-invariants. The latter one is a well-known problem of the classical invariant theory issues, concequentely,  the standard classical invariant theory approaches can be applied.

 The aim of this paper is to consider the problem of describing 3D geometric moment invariants precisely as a problem of the classical invariant theory.   
We formulated the problem of the computation of the 3D geometric moments invariants based on the notion of the  algebras of the both rational and polynomial simultaneous  invariants of several binary forms. Our goal is not to find new invariants, we just  put together some facts about the  geometric 3D moments and presented it from a single point of view.  

In this article, we proved that the introduced algebras of the 3D geometric moment invariants are isomorphic to the well-known  objects of the classical invariant theory, namely, algebras of the joint   invariants of  the several binary forms.
In the rational case, we firstly applied the standard infinitesimal method to the studying of the geometric moments and reduced the problem of calculating the $SO(3)$-invariants  to the equivalent problem of calculating the invariants of its Lie algebra $\mathfrak{so}_3$.

The paper is arranged as follows. 

In  Sect.~2, we  review  basic concepts of the classical invariant theory and provide the necessary facts regarding the action of the Lie groups $SO(3)$ and $SL(2)$ and their Lie algebras $\mathfrak{so}_3, $  and $\mathfrak{sl}_2, $  respectively on the vector spaces of binary and ternary forms. We introduce the  notions of the algebras of simultaneous rational and polynomial   3D geometric moment invariants and  prove  that  they are isomorphic  to the algebras of joint  rational and polynomial $\mathfrak{sl}_2$-invariants of several  binary forms. Also, we presented a system of   partial differential equations concerning those   invariants.

In  Sect.~3, we recall the basic notions of the representation theory of  the Lie algebras and present a minimal generating system for the algebra of the 3D geometric polynomial moments invariants of orders two and three which is expressed in the terms of eigenvectors of the  Casimir operator.  Also   we derive the formula for the corresponding Poincar\'e series.

In  Sect.~4,   we count out the number of elements in a minimal generating set of the algebra rational rotation invariants and present such minimal generating set for the rational invariants of second and third orders. Also, we express the explicit form the invariants of the degrees one of arbitrary order.

The article is a continuation of the \cite {BB} article, which addresses the similar issues for the 2D geometric moment invariants.


\section{Preliminary Concepts}
In this section,
 we briefly review some basic concepts of the classical invariant theory, give the necessary facts about the Lie  groups  $SO(3), SL(2)$ and their Lie algebras   $\mathfrak{so}_3$ and $\mathfrak{sl}_2$.  
Also,  we give  the definition of the algebras of simultaneous  rational and polynomial  3D geometric moment invariants and then establish an isomorphism between these algebras and the algebras of  the  joint  invariants of several  binary forms.

\subsection{Basic notions  of  the invariant theory }

Let  $GL(V)$ be the group of all invertible linear transformations of a finite-dimensional  complex vector space $V.$ The natural action of  $GL(V)$ on $V$ produces an action on the algebras of polynomial and rational functions  $\mathbb{C}[V]$   and  $\mathbb{C}(V)$.    If  $g \in GL(V)$, $F \in \mathbb{C}[V]$ define a new polynomial function  $ g \cdot F \in \mathbb{C}[V]$ by 
$$
(g \cdot F)(v)=F\left(  g^{-1} v\right).
$$
If $G$ is subgroup of  $GL(V)$ we say that   $F$ is  \textit{$G$-invariant} if $ g \cdot F=F$ for all  $g \in G.$
The $G$-invariant polynomial functions forms a  
subalgebra $\mathbb{C}[V]^G$ of $\mathbb{C}[V]$. The algebra $\mathbb{C}[V]^G$  is called the \textit{algebra of the polynomial $G$-invariants}. In the similar way, we define the algebra of\textit{ rational invariants}  $\mathbb{C}(V)^G.$

Let us recall that \textit{a derivation} of an  algebra   $R$ is an additive  map  $L$ satisfying the Leibniz rule: 
$$
L(r_1 \, r_2)=L(r_1) r_2+r_1 L(r_2), \text{  for all }  r_1, r_2 \in R.
$$
The subalgebra
$$
\ker L:=\left \{ f \in R|  L(f)=0 \right \},
$$
is called \textit{the kernel} of the derivation $L.$

Let  now $G $ be  a simply connected Lie group acting on $V$    and  let $\mathfrak{g}$ be its  Lie algebra.  By  an action of $\mathfrak{g}$ we understand its  representation  by preserving  Lie products of linear   operators on   $V.$  We will extend  these operators   on   $\mathbb{C}[V]$ and  $\mathbb{C}(V)$ as derivatives. It is well known that the condition   $ I \in \mathbb{C}[V]^G$  is equivalent to  $L(I)=0$, $\forall L \in \mathfrak{g}.$ Thus, 
$$
\mathbb{C}[V]^G=\mathbb{C}[V]^\mathfrak{g}=\bigcap\limits_{L \in \mathfrak{g}} \ker L.
$$ 
As a linear object, a Lie algebra is often a  much easier to work with than working directly with the corresponding Lie group.
We will use this fact later to ease  the computation of invariants.

The classical invariant theory is focused on the action of the general linear group on  homogeneous
polynomials, with an emphasis on the forms, mainly  binary and ternary ones. 
Let us consider two important invariant constructions  which  illustrate a computational advantage of the Lie algebras techniques.

 \begin{pr}\label{binaryF}{\rm The space  $V_d$  of \textit{binary forms} of degree $d$ is the vector space:
$$
V_d=\left\{ \sum_{k=0}^d \binom{d}{k} a_{k} x^{d-k} y^k \mid  a_{k} \in \mathbb{C} \right\}.
$$
The goup   $SL(2)$ is   a group of   $2 \times 2$ complex matrices   with determinant  one. 
The corresponding  Lie algebra  $\mathfrak{sl}_2 $  is generated by the
 matrices with zero trace  
\begin{gather*} h = \begin{pmatrix}
1 & 0\\
0 & -1
\end{pmatrix}, \quad 
e_+ = \begin{pmatrix}
0 & 1\\
0 & 0
\end{pmatrix}, \quad 
e_- = \begin{pmatrix}
0 & 0\\
1 & 0
\end{pmatrix} \end{gather*}
and the following commutation relations 
$$
 [h,e_+] = 2e_+, \quad [h,e_-] = -2e_-, \quad [e_+,e_-] = h.
$$

The elements $ e_-, e_+, h$   act on  $V_d$ by the derivations
$$
 -y \frac{\partial }{\partial x}, -x \frac{\partial }{\partial y}, -x \frac{\partial }{\partial x}-y \frac{\partial }{\partial y},
$$
and act on  $\mathbb{C}(V_d)$ by the derivations
$$
D_{+}=\sum_{k=0} (d-1) a_{k+1}\frac{\partial}{a_{k}},
D_{-}=\sum_{k=1}^d k a_{k-1}\frac{\partial}{a_{k}},
H=\sum_{k=0}^d (d-2k) a_{k}\frac{\partial}{a_{k}}.
$$
The polynomial solutions of the corresponding system of differential equations generate the algebra 
 $\mathbb{C}[V_d]^{\mathfrak{sl}_2}$ of invariants of binary form.  Since, 
$$[D_+, D_-]=D_+ D_- - D_- D_+=H$$
it follows that 
$$
\mathbb{C}[V_d]^{\mathfrak{sl}_2}=\ker D_+ \cap \ker D_-.
$$
The  \textit{minimal generating systems}  of $\mathbb{C}[V_d]^{\mathfrak{sl}_2}$ were  a major object of research in classical invariant theory of the 19th century. At present, 
such generators have been found only for $d \leq 10.$

 In the similar manner we define an action of  $SL(2)$ and  $\mathfrak{sl}_2$  on the direct sum 
$$
W= V_{k_1} \oplus V_{k_2} \oplus \cdots \oplus V_{k_n}.
$$
The corresponding algebras of polynomial and rational invariants are called the algebras of  \textit{joint invariants} (polynomial or rational)  of binary forms and   denoted by  $\mathbb{C}[W]^{\mathfrak{so}_2}$  and  $\mathbb{C}(W)^{\mathfrak{so}_2}$, respectively.
 At the present time, the algebras of the joint invariants are  only known for a few values  of $k_1, k_2, \ldots, k_n, $ see \cite{Olive}.

}
\end{pr}

\begin{pr}\label{so3}{\rm   
The 3D rotation group $SO(3)$ is the group of all rotations about the origin of three-dimensional Euclidean space. It is a three-parameters group with the following matrix realization  

$$
 \begin{pmatrix}  \cos\varphi & -\sin\varphi & 0\\ \sin\varphi &\cos\varphi & 0 \\ 0 & 0 & 1 \end{pmatrix}, \begin{pmatrix} \cos\theta & 0& -\sin\theta \\ 0 &1 & 0 \\ \sin\theta & 0 &\cos\theta  \end{pmatrix}, \begin{pmatrix}   \cos\psi & -\sin\psi & 0\\ \sin\psi &\cos\psi & 0 \\ 0 & 0 & 1  \end{pmatrix},  \psi, \theta, \varphi \in [0, 2 \pi].
$$ 
where the parameters   $\psi, \theta, \varphi$ are the \textit{Euler angles}.

 The associated tree-dimensional \textit{complex}  Lie algebra   $\mathfrak{so}_3$  is generated by the matrix
\begin{gather*}
e_1=\begin{pmatrix} 0 & 1 & 0 \cr -1 & 0 & 0 \cr 0 & 0 & 0 \end{pmatrix},
e_2=\begin{pmatrix} 0 & 0 & 1 \cr 0 & 0 & 0 \cr -1 & 0 & 0 \end{pmatrix},
e_3=\begin{pmatrix} 0 & 0 & 0 \cr 0 & 0 & 1 \cr 0 & -1 & 0 \end{pmatrix}, 
\end{gather*}
and the Lie brackets are given by commutator, i.e.,
$$
[e_1,e_2]=-e_3,\;[e_1,e_3]=e_2,\;[e_2,e_3]=-e_1.
$$
Let us recall that the space of \textit{ternary forms} of degree $d$ is the vector space:
$$
T_d=\left\{ \sum_{j+k+l=d} \binom{d}{j,k,l} a_{j,k,l} x^j y^k z^l \mid  a_{j,k,l} \in \mathbb{C} \right\},
$$
 where $\displaystyle \binom{d}{j,k,l}=\frac{d!}{j! k! l!}$ denotes the multinomial coefficient. 
The linear functions  
$$
 \sum_{j+k+l=d} \binom{d}{j,k,l} a_{j,k,l} x^j y^k z^l \mapsto a_{j,k,l},
$$
form a basis of the dual vector space  $T_d^*$. For convenience, it is useful to  equal the functions and  the corresponding coefficients $a_{j,k,l}$.

It is a well-known, see, for example, \cite{Woit}, that  $\mathfrak{so}_3$  
 acts on  $T_d$  by derivations
$$
x \frac{\partial}{\partial y}-y \frac{\partial}{\partial x},x \frac{\partial}{\partial z}-z \frac{\partial}{\partial x}, y \frac{\partial}{\partial z}-z \frac{\partial}{\partial y}.
$$
Wherefrom, it follows that $\mathfrak{so}_3$ acts on the dual space  
 $T_d^*$ by the derivations:
\begin{te}\label{T1}
\begin{align*}
&E_1(a_{j,k,l})=k a_{j+1,k-1,l}-j a_{j-1,k+1,l},\\
&E_2(a_{j,k,l})=l a_{j+1,k,l-1} -j a_{j-1,k,l+1} ,\\
&E_3(a_{j,k,l})=l a_{j,k+1,l-1}-k a_{j,k-1,l+1}.
\end{align*}
\end{te}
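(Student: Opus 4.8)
The plan is to derive the action on $T_d^*$ directly from the given action on $T_d$ by passing to the contragredient representation and then rescaling the dual coordinates by the multinomial weights. First I would fix the monomial basis $m_{j,k,l} = x^j y^k z^l$ (with $j+k+l=d$) of $T_d$ and compute how each generator acts on it. Since each operator is first order, this is a one-line differentiation; for instance $e_3 = y\partial_z - z\partial_y$ sends $m_{j,k,l}$ to $l\, m_{j,k+1,l-1} - k\, m_{j,k-1,l+1}$, and the analogous formulas hold for $e_1$ and $e_2$. This records the matrix of each generator acting on $T_d$ in the monomial basis.

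Next I would invoke the standard fact that, under the group convention $(g\cdot F)(v) = F(g^{-1}v)$ adopted earlier, the induced Lie-algebra action on linear functionals is the negative transpose $X \mapsto -\rho(X)^{T}$, i.e. $(\rho^*(X)\phi)(v) = -\phi(\rho(X)v)$. Applying this to the dual basis $\{m_{j,k,l}^*\}$ transposes the raising/lowering pattern found above; reading off the $e_3$ computation gives $\rho^*(e_3)\,m_{j,k,l}^* = (k+1)\,m_{j,k+1,l-1}^* - (l+1)\,m_{j,k-1,l+1}^*$, and similarly for $e_1$ and $e_2$. The only care needed here is to track which monomial contributes to which dual basis vector and with what integer prefactor.

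The final step is the bookkeeping that produces the clean coefficients $j,k,l$ of the statement. Because the coordinate function $a_{j,k,l}$ carries the multinomial normalization, one has $m_{j,k,l}^* = \binom{d}{j,k,l}\,a_{j,k,l}$, so I would substitute this into the contragredient formulas and cancel the multinomial ratios. The key simplification is that $(k+1)\binom{d}{j,k+1,l-1}/\binom{d}{j,k,l} = l$ and $(l+1)\binom{d}{j,k-1,l+1}/\binom{d}{j,k,l} = k$, together with the parallel identities for the other two generators; these collapse the integer prefactors arising from the dualization against the multinomial weights, yielding exactly $E_3(a_{j,k,l}) = l\,a_{j,k+1,l-1} - k\,a_{j,k-1,l+1}$ and likewise $E_1, E_2$.

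I expect the main obstacle to be purely one of sign and normalization consistency rather than any conceptual difficulty: one must ensure that the negative-transpose convention is the one compatible with the chosen group action, and that the multinomial normalization of the $a_{j,k,l}$ is carried correctly through both the dualization and the rescaling. A reliable sanity check is to run the identical procedure on the $\mathfrak{sl}_2$ example of the preceding paragraph and confirm that it reproduces the stated operators $D_+, D_-, H$ on $\mathbb{C}(V_d)$; agreement there certifies that all conventions are aligned before committing to the $\mathfrak{so}_3$ formulas.
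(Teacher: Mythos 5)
Your proposal is correct and follows exactly the route the paper intends: its entire proof is the one-line remark ``using the definition of the $\mathfrak{so}_3$-action on the dual space,'' and your plan simply carries that out in detail (action on monomials, contragredient via the negative transpose, then rescaling by the multinomial weights, whose ratios indeed collapse to the coefficients $j,k,l$ as you claim). The computations you outline check out, including the identities $(k+1)\binom{d}{j,k+1,l-1}/\binom{d}{j,k,l}=l$ and $(l+1)\binom{d}{j,k-1,l+1}/\binom{d}{j,k,l}=k$, so there is nothing to add beyond the sign/convention bookkeeping you already flag.
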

\begin{proof}
Using a definition of the $\mathfrak{so}_3$-action on the dual space.
\end{proof}

In the similar manner, we define an action of $SO(3)$  and   $\mathfrak{so}_3$  on the direct sum 
$$
U=T_{k_1} \oplus T_{k_2}  \oplus \cdots \oplus T_{k_3}.
$$
The corresponding algebras of polynomial and rational invariants are called the algebras of  \textit{joint 3D rotation invariants}   and   denoted by  $\mathbb{C}[U]^{\mathfrak{so}_3}$  and  $\mathbb{C}(U)^{\mathfrak{so}_3}$, respectively. 
  More details about 3D rotations can be found , e.g.,  in \cite{Woit}, \cite{Hall}.

 }
\end{pr}

An important detail  that plays a crucial role in this article is a well-known fact that the complex Lie algebras $ \mathfrak {so}_3 $ and $\mathfrak {sl}_3 $ are isomorphic, although the corresponding Lie groups are not isomorphic. To establish the isomorphism, we introduce new matrices

 \begin{gather*}
\mathcal{D}_+=i e_1{+}e_2= \left( \begin {array}{ccc} 0&i&1\\ \noalign{\medskip}-i&0&0
\\ \noalign{\medskip}-1&0&0\end {array} \right),
\mathcal{D}_-=i e_1{-}e_2=\left( \begin {array}{ccc} 0&i&1\\ \noalign{\medskip}-i&0&0\\ \noalign{\medskip}-1&0&0\end {array} \right),
\mathcal{H}=2i e_3=2i \begin{pmatrix} 0 & 0 & 0 \cr 0 & 0 & 1 \cr 0 & -1 & 0 \end{pmatrix}.
\end{gather*}
By direct calculations of their commutators, we obtain 
$$
 [\mathcal{H},\mathcal{D}_+] = 2\mathcal{D}_+, \quad [\mathcal{H},\mathcal{D}_-] = -2\mathcal{D}_-, \quad [\mathcal{D}_+,\mathcal{D}_-] = \mathcal{H}.
$$
The commutators  coincide with the corresponding commutators of the basic elements  for the algebra
  $\mathfrak{sl}_2,$ which establishes the isomorphism. 

Note that the operators act on the basis elements of $T_d^*$ as follows

\begin{align*}
&\mathcal{D}_+(a_{j,k,l})=i \left(k a_{j+1,k-1,l}-j a_{j-1,k+1,l}\right)+l a_{j+1,k,l-1} -j a_{j-1,k,l+1},\\
&\mathcal{D}_-(a_{j,k,l})=i \left(k a_{j+1,k-1,l}-j a_{j-1,k+1,l}\right)-\left(l a_{j+1,k,l-1} -j a_{j-1,k,l+1} \right),\\
&\mathcal{H}(a_{j,k,l})=2i\left(l a_{j,k+1,l-1}-k a_{j,k-1,l+1} \right).
\end{align*}

	As we will  see later, this isomorphism allows  us  reduce the problem of finding the 3D rotation invariants to the problem of calculating the invariants of binary forms which is a  classical  invariant theory problem.

\subsection{Algebras of 3D rotation invariants.}


In the sequel, we will work with 
the  similarity transformation group  $G$ which  is widely used in    3D   image analysis and
pattern recognition. The group     is  the semi-direct product of the space translation group  $TR(3)$, the   direct product of the  space rotation group $SO(3)$  and the  uniform scaling  group $\mathbb{R}^*$:   
$$
G=(\mathbb{R}^* \times SO(3)) \rtimes TR(3). 
$$

The introduction  of the notion of  2D image moment invariants by Hu in the  significant paper \cite{Hu}  is a   
vivid  example of the application of the classical invariant theory to the pattern recognition. A way of the generalization of this approach for 3D images was suggested in \cite{SH}, \cite{LD}.   Let  $\mathbi{F}$ be a set of   real finite piece-wise continuous functions  that  can have nonzero values only in 
 a compact subset of  $\mathbb{R}^3.$ 
  
		Let us consider   \textit{ the geometric moments } of  $f \in \mathbi{F}$
		
		$$
m_{pqr}(f(x,y,z))=m_{pqr}=\iiint\limits_{\Omega} x^p y^q z^r f(x,y,z) dx dy dz, \Omega \subset \mathbb{R}^3,
$$
and the \textit{central geometric moment}
$$
\mu_{pqr}(f(x,y,z))=\mu_{pqr}=\iiint\limits_{\Omega} (x-\bar{x})^p (y-\bar{y})^q (z-\bar{z})^r f(x,y,z) dx dy dz,
$$
where
$$
\bar{x}=\frac{m_{100}}{m_{0,0,0}},\bar{y}=\frac{m_{010}}{m_{0,0,0}}, \bar{y}=\frac{m_{001}}{m_{0,0,0}}
$$
The central geometric moments are already invariants under the translation group.  After the  normalization 
$$
\eta_{p,q,r}=\frac{\mu_{p,q,r}}{\mu_{0,0}^{1+\frac{p+q+r}{3}}}, p+q+r \geq 2,
$$
they become invariants  of the scaling group. Therefore, the problem of determining of the 3D geometric image moment invariants   can be  reduced to the  problem of finding $SO(3)$-invariants as functions of the normalized central geometric moments.   Therefore, in this paper we will deal   only  with the normalized $SO(3)$-invariant functions.

We will  consider two types of such functions, specifically,
 polynomials and rational ones. 
Let  $\mathbb{C}[\eta]$ and $\mathbb{C}(\eta)$ be the  polynomial and rational  algebras   in  countably many  variables $\{\eta_{p,q,r}\}_{p+q+r=2}^{\infty}$ considered with the natural action of the group  $SO(3).$
Denote  by   $\mathbb{C}[\eta]^{SO(3)}$ and  $\mathbb{C}(\eta)^{SO(3)}$ the corresponding algebras of \textit{polynomial}  and  \textit{rational  moment } invariants, respectively.  Since these algebras are not finitely generated, then  a complete set of invariants consists of infinitely many invariants. However, these algebras can be approximated by the finitely generated algebras $\mathbb{C}[\eta]_d^{SO(3)}$ and  $\mathbb{C}(\eta)_d^{SO(3)}$where $[\eta]_d=\{ \eta_{p,q,r},2 \leq p+q+r \leq d\} $. The elements of these algebras are called the \textit{simultaneous}  3D geometric moment (polynomial or rational) invariants    of \textit{order} up to $d.$ 
 For instance, the invariant
$$ \eta_{2,0,0}+\eta_{0,2,0}+\eta_{0,0,2},$$
belong to $\mathbb{C}[\eta]_2^{SO(3)}$ and  $\mathbb{C}(\eta)_2^{SO(3)}.$

		Remarkably, in general case, the problem of describing the algebras of the simultaneous 3D geometric moment invariants 
can be reduced to the well-known problems of the classical
invariant theory.  It turns out that 
the algebras  $\mathbb{C}(\eta)_d^{SO(3)}$ and  $\mathbb{C}(\eta)_d^{SO(3)}$ are isomorphic   to the algebras  of \textit{joint  polynomial  and rational $SL(2)$-invariants} of  some  system of binary forms. 

The \textit{locally isomorphism} of $SO(3)$ and  $SL(2)$ implies the following theorem.

\begin{te}
The algebras of polynomial and rational simultaneous  3D geometric moment  invariants $\mathbb{C}[\eta]_d^{SO(3)}$   and  $\mathbb{C}(\eta)_d^{SO(3)}$ are isomorphic to the algebras of  invariants $\mathbb{C}[U_d]^{\mathfrak{sl}_2}$  and   $\mathbb{C}(U_d)^{\mathfrak{sl}_2}$, respectively.   Here 
$$
U_d=T_2 \oplus T_3 \oplus \cdots \oplus T_d, 
$$
and   $T_k$  is the vector space  of  ternary forms of order  $k$.
\end{te}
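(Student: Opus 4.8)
The plan is to factor the asserted isomorphism through the Lie algebra: first I would show that the moments of each fixed order carry a copy of the ternary-form representation, then replace $SO(3)$-invariance by $\mathfrak{so}_3$-invariance, and finally reinterpret $\mathfrak{so}_3$-invariance as $\mathfrak{sl}_2$-invariance by means of the basis $\mathcal{D}_+, \mathcal{D}_-, \mathcal{H}$ introduced above. To carry out the first step I would compute the transformation law of the moments under a rotation $g\in SO(3)$. Setting $f^g(x,y,z)=f(g^{-1}(x,y,z))$ and changing variables in the defining integral — the Jacobian equals $1$ since $\det g=1$ — the monomial $x^p y^q z^r$ is replaced by the homogeneous polynomial $(g\cdot x)^p(g\cdot y)^q(g\cdot z)^r$ of degree $k=p+q+r$; expanding it in monomials shows that the moments $\{m_{pqr}\}_{p+q+r=k}$ of a fixed order $k$ transform linearly, precisely by the representation of $SO(3)$ on the space of homogeneous degree-$k$ polynomials in three variables, that is, on $T_k$ up to duality. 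The centroid rotates together with $f$, so the central moments $\mu_{pqr}$ follow the same law, and the normalizing factor $\mu_{000}=\iiint f$ is itself a rotation invariant; hence the $\eta_{pqr}$ of order $k$ carry the same representation as the $\mu_{pqr}$. After matching the multinomial normalization, differentiating this action along the one-parameter subgroups $\exp(t e_i)$ reproduces the derivations $E_1, E_2, E_3$ of Theorem \ref{T1} acting on the coordinates of $T_k^*$; and since the standard three-dimensional representation of $SO(3)$, and therefore each $T_k$, is self-dual, the moment variables $\{\eta_{pqr}\}_{2\le p+q+r\le d}$ are identified $\mathfrak{so}_3$-equivariantly with the coordinate functions on $U_d=T_2\oplus\cdots\oplus T_d$, giving $\mathbb{C}[\eta]_d\cong\mathbb{C}[U_d]$ and $\mathbb{C}(\eta)_d\cong\mathbb{C}(U_d)$ as $\mathfrak{so}_3$-algebras.

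Next I would pass from the group to the algebra and change the basis. Since $SO(3)$ is connected it is generated by the subgroups $\exp(t e_i)$, so a polynomial or rational function is $SO(3)$-invariant if and only if it is annihilated by $E_1, E_2, E_3$; this is the equivalence $\mathbb{C}[V]^G=\mathbb{C}[V]^{\mathfrak{g}}$ recalled in Section 2. Hence $\mathbb{C}[\eta]_d^{SO(3)}=\mathbb{C}[\eta]_d^{\mathfrak{so}_3}$, and likewise in the rational case. As $\mathcal{D}_+,\mathcal{D}_-,\mathcal{H}$ are $\mathbb{C}$-linear combinations of $E_1, E_2, E_3$ spanning the same complex Lie algebra of derivations, their simultaneous kernels coincide, $\ker E_1\cap\ker E_2\cap\ker E_3=\ker\mathcal{D}_+\cap\ker\mathcal{D}_-\cap\ker\mathcal{H}$, and by the relation $[\mathcal{D}_+,\mathcal{D}_-]=\mathcal{H}$ the last intersection reduces to $\ker\mathcal{D}_+\cap\ker\mathcal{D}_-$. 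Transporting these equations through the equivariant identification of the first step produces exactly the defining conditions of $\mathbb{C}[U_d]^{\mathfrak{sl}_2}$ and $\mathbb{C}(U_d)^{\mathfrak{sl}_2}$, which yields the two claimed isomorphisms.

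The step I expect to be the main obstacle is the first one: verifying with the correct signs and multinomial factors that the differentiated moment action coincides with the operators $E_1, E_2, E_3$ of Theorem \ref{T1}, and being careful that the object produced by pairing the moments against monomials is naturally $T_k^*$, to be reconciled with $T_k$ through self-duality. Once the moment representation has been pinned down, the transition to the Lie algebra via connectedness and the change of basis to $\mathcal{D}_+,\mathcal{D}_-,\mathcal{H}$ are purely formal and apply verbatim to both the polynomial and the rational algebras.
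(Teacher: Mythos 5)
Your proposal follows essentially the same route as the paper: compute the transformation of the moments under a one-parameter rotation subgroup by expanding the defining integral, differentiate at the identity to recover the derivations $E_1,E_2,E_3$ of Theorem~\ref{T1}, invoke connectedness to replace $SO(3)$-invariance by $\mathfrak{so}_3$-invariance, and pass to $\mathfrak{sl}_2$ via the complex basis $\mathcal{D}_+,\mathcal{D}_-,\mathcal{H}$. The only difference is that you spell out some points the paper leaves implicit (that the central moments inherit the same transformation law because the centroid rotates with $f$, that $\mu_{000}$ is rotation-invariant, and the $T_k$ versus $T_k^*$ identification), which are correct and harmless additions.
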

\begin{proof} It is sufficient to check that the algebras  
 $\mathfrak{so}_3$ and  $\mathfrak{sl}_2$  act by identical derivatives.
Let us consider the action of the element $$\begin{pmatrix}  \cos\theta & -\sin\theta & 0\\ \sin\theta &\cos\theta & 0 \\ 0 & 0 & 1 \end{pmatrix} \in SO(3)$$
 on the  normalized moment $\eta_{j,k,l}$. By the definition, we have   

\begin{gather*}
\begin{pmatrix}  \cos\theta & -\sin\theta & 0\\ \sin\theta &\cos\theta & 0 \\ 0 & 0 & 1 \end{pmatrix}^{-1}\eta_{p,q,r}=\iiint\limits_{\Omega} (x\cos\theta-y\sin\theta)^p (x\sin\theta +y\cos\theta)^q z^r f(x,y,z) dx dy dz=\\=
\iiint\limits_{\Omega} \sum_{k=0}^p \sum_{j=0}^q (-1)^{p-k}\binom{p}{k}\binom{q}{j} (\cos\theta)^{p-k+j} (\sin\theta)^{q+k-j} x^{p-k+q-j} y^{k+j} z^r f(x,y,z) dx dy dz=\\
=\sum_{k=0}^p \sum_{j=0}^q (-1)^{p-k}\binom{p}{k}\binom{q}{j} (\cos\theta)^{p-k+j} (\sin\theta)^{q+k-j}\eta_{p-k+q-j, k+j,r}.
\end{gather*}
To get the action of the Lie algebra  $\mathfrak{so}_3$ we differentiate it by $\theta$ and, after simplification, we obtain:
$$
\frac{d}{d \theta} \begin{pmatrix}  \cos\theta & -\sin\theta & 0\\ \sin\theta &\cos\theta & 0 \\ 0 & 0 & 1 \end{pmatrix}^{-1} \eta_{p,q,r} \Big|_{\theta=0}=q \eta_{p+1,q-1,r}-p \eta_{p-1,q+1,r}.
$$

It is easy to see that this action is identical to the derivation  $E_1$, as it described in Example~\ref{so3}.

In the same manner,  we can show that the following elements  of $SO(3):$

$$
  \begin{pmatrix}  1 & 0 & 0 \\ 0 &\cos\theta & -\sin\theta \\ 0 & \sin\theta &\cos\theta  \end{pmatrix}, \begin{pmatrix}   \cos\psi & -\sin\psi & 0\\ \sin\psi &\cos\psi & 0 \\ 0 & 0 & 1  \end{pmatrix}, 
$$ 
 act like  the derivations $E_2$ and $E_3.$
Thus, the normalized 3D geometric moment invariants and the joint  $\mathfrak{so}_3$-invariants of the  binary forms are defined by the same  system of  the  partial differential equation. It implies that  $\mathbb{C}(\eta)_d^{SO(3)} \cong \mathbb{C}(U_d)^{\mathfrak{so}_3}$  and $\mathbb{C}[\eta]_d^{SO(3)} \cong \mathbb{C}[U_d]^{\mathfrak{so}_3}.$
Since,  $\mathfrak{so}_3 \cong \mathfrak{sl}_2$  we get that $\mathbb{C}(\eta)_d^{SO(3)} \cong \mathbb{C}(U_d)^{\mathfrak{sl}_2}$  and $\mathbb{C}[\eta]_d^{SO(3)} \cong \mathbb{C}[U_d]^{\mathfrak{sl}_2}$  as required.

The isomorphism has the simple form: $a_{j,k,r} \mapsto \eta_{j,k,r}.$
\end{proof}

Thus, from the point of view of  the classical invariant theory, the problem of  the description  of the algebras 3D geometric image  moment invariants $\mathbb{C}[\eta]_d^{SO(3)}$,  $\mathbb{C}(\eta)_d^{SO(3)}$ can be reduced to the following two problems.

\begin{itemize}
	\item 
\textbf{Problem~1.}  What is  a minimal generating set of  the algebra polynomial joint invariants  $\mathbb{C}[U_d]^{\mathfrak{sl}_2}?$

\item 
\textbf{Problem~2.} What is  a minimal generating set of  the algebra rational  joint invariants  $\mathbb{C}(U_d)^{\mathfrak{sl}_2}$?
\end{itemize}



Besides, the problem of deriving of 3D geometric moment  invariants can be  reduced to a system of differential equations. The last result of Subsest.~2.1 implies the  theorem:
\begin{te} The algebra  $\mathbb{C}(U_d)^{\mathfrak{sl}_2}$ coincides with the algebra of rational solutions of the first order simultaneous  partial differential equations:
$$
\begin{cases}
\displaystyle \sum_{2 \leq j+k+l\leq d}(k \eta_{j+1,k-1,l}-j \eta_{j-1,k+1,l})\frac{\partial U }{\partial \eta_{j,k,l}}=0,\\
\displaystyle \sum_{2 \leq j+k+l\leq d} (l \eta_{j+1,k,l-1} -j \eta_{j-1,k,l+1})\frac{\partial U }{\partial \eta_{j,k,l}}=0.
\end{cases}
$$
\end{te}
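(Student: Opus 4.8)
The plan is to recognize the two displayed partial differential equations as nothing more than the vanishing conditions for two of the three basic derivations of the $\mathfrak{so}_3$-action, and then to argue that the third condition is redundant. First I would note that a derivation $E$ of $\mathbb{C}(U_d)$ is completely determined by its values on the coordinates $\eta_{j,k,l}$, and that for any rational function $U$ one has $E(U)=\sum_{j,k,l} E(\eta_{j,k,l})\,\partial U/\partial\eta_{j,k,l}$. Comparing with the formulas for $E_1$ and $E_2$ from Example~\ref{so3} (under the identification $a_{j,k,l}\mapsto\eta_{j,k,l}$ supplied by the previous theorem), the first displayed equation is precisely $E_1(U)=0$ and the second is precisely $E_2(U)=0$.

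Next I would invoke the general principle recalled in Subsection~2.1: a rational function is a $\mathfrak{g}$-invariant exactly when it is annihilated by every element of $\mathfrak{g}$, so that $\mathbb{C}(U_d)^{\mathfrak{so}_3}=\ker E_1\cap\ker E_2\cap\ker E_3$. The whole content of the theorem is therefore the reduction of this triple intersection to the double intersection $\ker E_1\cap\ker E_2$, i.e.\ the elimination of the equation $E_3(U)=0$. This mirrors exactly the step in Example~\ref{binaryF}, where the relation $[D_+,D_-]=H$ forces $\ker D_+\cap\ker D_-$ to lie inside $\ker H$.

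The key step is the operator identity $[E_1,E_2]=-E_3$ on $\mathbb{C}(U_d)$. Since the assignment $e_i\mapsto E_i$ extends the matrix action to a representation of $\mathfrak{so}_3$ by derivations, i.e.\ a Lie-algebra homomorphism, the commutator of derivations reproduces the matrix bracket $[e_1,e_2]=-e_3$, giving $E_1E_2-E_2E_1=-E_3$. Consequently, if $E_1(U)=0$ and $E_2(U)=0$, then $E_3(U)=E_2(E_1(U))-E_1(E_2(U))=0$ automatically, so $\ker E_1\cap\ker E_2\subseteq\ker E_3$ and hence $\ker E_1\cap\ker E_2=\ker E_1\cap\ker E_2\cap\ker E_3=\mathbb{C}(U_d)^{\mathfrak{so}_3}$. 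Using $\mathfrak{so}_3\cong\mathfrak{sl}_2$ together with the previous theorem, this last algebra equals $\mathbb{C}(U_d)^{\mathfrak{sl}_2}$, which shows that the algebra of rational solutions of the two equations coincides with $\mathbb{C}(U_d)^{\mathfrak{sl}_2}$.

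I expect no serious obstacle here: the only point requiring care is the derivation-level bracket $[E_1,E_2]=-E_3$, that is, that passing from the matrices $e_i$ to the induced derivations $E_i$ preserves the Lie product (the same sign-convention subtlety that already appears when checking the commutation relations for $\mathcal{D}_+,\mathcal{D}_-,\mathcal{H}$). Once this homomorphism property is confirmed, the reduction from three equations to two is immediate, in complete parallel with the binary-form computation.
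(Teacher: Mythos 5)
Your argument is correct and is exactly the reasoning the paper intends: the paper states this theorem without proof (asserting only that it ``follows'' from the preceding subsection), and your reduction of the three conditions $E_1(U)=E_2(U)=E_3(U)=0$ to two via the derivation identity $[E_1,E_2]=-E_3$ — which one can verify directly on the coordinates $\eta_{j,k,l}$ — is the precise analogue of the $\ker D_+\cap\ker D_-$ reduction in Example~\ref{binaryF}. You have in effect supplied the details the paper omits, and no gap remains.
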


In the next section we will deal with the algebras   $\mathbb{C}[U_d]^{\mathfrak{sl}_2}$ and $\mathbb{C}(U_d)^{\mathfrak{sl}_2}$.



    \section{The algebra of    polynomial  invariants $\mathbb{C}[U_d]^{\mathfrak{sl}_2}$. }

		
		Let us recall some facts about representations of the Lie algebra 
 $\mathfrak{sl}_2.$
\subsection{ Representations  of $\mathfrak{sl}_2$}

Let  $V$ be a finite-dimensional complex vector  space equipped with  non-trivial linear operators   $D_+,D_-,H : V \to V,$ which satisfy the following commutation relations 
$$
[H,D_+] = H D_+-D_+ H= 2D_+, \quad [H,D_-] = -2D_-, \quad [D_+,D_-] = H
$$
Then $V$  is called the  \textit{linear representation} of the Lie algera $\mathfrak{sl}_2$ or $\mathfrak{sl}_2$-\textit{module}. The vector spaces $T_k^*$, $U_d$ defined above are the samples of    $\mathfrak{sl}_2$-{modules.} The modules $0$ and $V$ are called \textit{trivial} modules.
A $\mathfrak{sl}_2$-module $V$  is called  \textit{irreducible} if $V$ has no non-trivial  $\mathfrak{sl}_2$-\textit{submodule}. All  irreducible  $\mathfrak{sl}_2$-modules, up to isomorphism,  can be described with  the following construction.

Let  $\mathcal{V}_n=\langle a_0,a_1,\ldots a_n \rangle$  be a  $n{+}1$-dimension complex vector space and let the linear operators $D_-, D_+, H: V_n \to V_n$   act on elements of the  basis as follows : 
\begin{align*}
&D_-(a_k)=k a_{k-1}, D_+ (a_k)=(n-k) a_{k+1}, H(a_k)=(n-2k) a_k.
\end{align*} 
 Let us check that the commutation relation for $\mathfrak{sl}_2$ are fulfilled. In fact, we have
\begin{align*}
&[D_-, D_+](a_k)=D_-(D_+ (a_k))-D_+( D_- (a_k))=D_-((d-k) a_{k+1})-D_+( k a_{k-1})=\\&=(d-k)(k+1)a_k-k(d-(k-1))a_k=(d-2k)a_k=H(a_k),\\
 &[H,  D_-](a_k)=H(D_-(a_k))-D(H(a_k))=H(k a_{k-1})-D((d-2k)a_k)=\\&=k(d-2(k-1))a_{k-1}-(d-2k)k a_{k-1}=2 k a_{k-1}=2 D(a_k),\\
&[H,  D_+](a_k)=H(D_+(a_k))-D_+(H(a_k))=H((d-k) a_{k+1})-D_+((d-2k)a_k)=\\&=
(d-k)(d-2(k+1)) a_{k+1}-(d-2k)(d-k)a_{k+1}=-2(d-k)a_{k+1}=-2 D_+(a_k).
\end{align*}
Therefore, $\mathcal{V}_n$ is an representation of  $\mathfrak{sl}_2.$ The vector space $\mathcal{V}_n$ considered together with the indicated action of the operators $D_-, D_+, H$ is called \textit{the standard irreducible }$sl_2$-module.
It is well known, see \cite{FH}, the an arbitrary  $\mathfrak{sl}_2$-module can be  decomposed into  an direct sum of the irreducible standard   $\mathfrak{sl}_2$-modules.    
Next,  we present an algorithm of decomposing an arbitrary $\mathfrak{sl}_2 $-module into the irreducible submodules.  We  use the algorithm later to construct invariants.

 Let $W$ be an arbitrary $\mathfrak{sl}_2$-module.
For any element $w \in W$ the smallest natural number, denoted ${\rm ord}(w)$, such that 
$$
D_+^{{\rm ord}(z) }(w) \ne 0, \mbox{  but  } D_+^{{\rm ord}(z)+1}(w) = 0. 
$$
is called \textit{the order} of $w.$ Since 
 $D_+$ is a \textit{nilpotent} operator,  the order ${\rm ord}(w)$ is defined correctly.

A vector $z \in W$ is called  \textit{the lowest weight vector}  if the following conditions holds: $D_-(z)=0$  and  $H(z)=\mbox{ord}(z)z.$  Any  lowest weight vector defines an irreducible  $sl_2$-module  which is isomorphic to the standard  $sl_2$-module.  The following  theorem holds.

\begin{te}\label{sb}
Suppose   $z \in W$ is a  lowest weight vector. Then the vector space 
$$  
{\mathcal{V}}_s(z){:=}\langle v_0(z),v_1(z),\ldots v_{s}(z) \rangle,  s={\rm ord}(z),
$$ 
where  $$v_k(z)=\frac{(s-k)!}{ s!}\, D_+^{\,{k}}(z), v_0(z):=z,$$ 
is $\mathfrak{sl}_2$-module   isomorphic to the standard  $sl_2$-module $\mathcal{V}_s$.
\end{te}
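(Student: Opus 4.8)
The plan is to exhibit an explicit $\mathfrak{sl}_2$-module isomorphism $\phi\colon \mathcal{V}_s \to \mathcal{V}_s(z)$ sending the basis vector $a_k$ of the standard module to $v_k(z)$. Since $\mathcal{V}_s$ is defined by the three action formulas $D_-(a_k)=k a_{k-1}$, $D_+(a_k)=(s-k)a_{k+1}$, $H(a_k)=(s-2k)a_k$, it suffices to verify that the operators $D_\pm, H$ act on the vectors $v_k(z)$ by exactly the same formulas, namely $D_+(v_k)=(s-k)v_{k+1}$, $D_-(v_k)=k v_{k-1}$ and $H(v_k)=(s-2k)v_k$ (with the conventions $v_{-1}=v_{s+1}=0$), and that $v_0,\dots,v_s$ are linearly independent. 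These formulas simultaneously show that $\mathcal{V}_s(z)$ is closed under the action, hence is an $\mathfrak{sl}_2$-submodule of $W$, and that $\phi$ intertwines the action.

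Two of the three identities are straightforward. The relation $D_+(v_k)=(s-k)v_{k+1}$ follows at once from the definition $v_k=\tfrac{(s-k)!}{s!}D_+^{k}(z)$ by comparing the factorial normalisations; the boundary case $D_+(v_s)=0$ is exactly the condition $D_+^{s+1}(z)=0$ coming from $s=\operatorname{ord}(z)$. For the action of $H$, I would first prove by induction on $k$, using the commutation relation between $H$ and $D_+$ together with the lowest-weight hypothesis $H(z)=s z$, that $H(D_+^{k}(z))=(s-2k)D_+^{k}(z)$; multiplying by the normalising constant gives $H(v_k)=(s-2k)v_k$, so each $v_k$ is an $H$-eigenvector of weight $s-2k$.

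The computation of $D_-(v_k)$ is the main obstacle, since $D_-$ does not commute with $D_+$. The idea is to establish, by induction on $k$, an identity of the form $D_-(D_+^{k}(z))=\lambda_k\,D_+^{k-1}(z)$ for suitable scalars $\lambda_k$. To pass from $k-1$ to $k$ one writes $D_-D_+ = D_+D_- + H$ (the bracket relation between $D_-$ and $D_+$) and applies it to $D_+^{k-1}(z)$; the term $D_+D_-D_+^{k-1}(z)$ is handled by the inductive hypothesis and the term $H D_+^{k-1}(z)$ by the weight formula of the previous step, together with the base case $D_-(z)=0$. Solving the resulting recurrence yields $\lambda_k=k(s-k+1)$. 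Substituting $v_k=\tfrac{(s-k)!}{s!}D_+^{k}(z)$ and simplifying the factorials — which were chosen precisely so that this scalar collapses — gives $D_-(v_k)=k v_{k-1}$.

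Finally I would assemble the pieces. Each $v_k$ with $0\le k\le s$ is nonzero, for otherwise $D_+^{s}(z)=0$ would contradict $\operatorname{ord}(z)=s$; since the $v_k$ are $H$-eigenvectors with the pairwise distinct weights $s,s-2,\dots,-s$, they are linearly independent and therefore form a basis of $\mathcal{V}_s(z)$, so $\dim \mathcal{V}_s(z)=s+1=\dim\mathcal{V}_s$. The three verified action formulas show that $\mathcal{V}_s(z)$ is an $\mathfrak{sl}_2$-submodule and that the linear bijection $\phi\colon a_k\mapsto v_k$ commutes with $D_+$, $D_-$ and $H$; hence $\phi$ is the desired isomorphism of $\mathfrak{sl}_2$-modules.
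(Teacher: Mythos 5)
Your proof is correct and follows essentially the same route as the paper's: both arguments reduce to checking that $D_+$, $D_-$ and $H$ act on the normalised vectors $v_k(z)=\tfrac{(s-k)!}{s!}D_+^{k}(z)$ by the standard-module formulas, the crux being $D_-(D_+^{k}(z))=k(s-k+1)D_+^{k-1}(z)$ obtained from the bracket of $D_-$ with $D_+$ (your sign $D_-D_+=D_+D_-+H$ is the one consistent with the paper's own verification of the standard module, even though the header of Section~3.1 states $[D_+,D_-]=H$). If anything you are more complete than the paper, which asserts the two auxiliary identities for $H(D_+^{k}(z))$ and $D_-(D_+^{k}(z))$ as ``easy to verify by direct calculations'', derives the constants $\alpha_k$ from a recurrence rather than verifying them, and never checks that $v_0,\dots,v_s$ are nonzero and linearly independent, whereas you prove both identities by induction and deduce independence from the pairwise distinct $H$-weights.
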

\begin{proof}

It is easy to verify by direct calculations that the  relations  
\begin{align*}
 & H(D_+^{\,k}(z))=(s-2k)\,D_+^{\,k}(z),\\
 & D_-(D_+^{\,k}(z))=k (s-k+1)\,D_+^{\,k-1}(z),  
\end{align*}
hold for all 
 $k \leq s.$
 Let us construct  the  standard $\mathfrak{sl}_2$-module  $\mathcal{V}_s$ with the basis vectors of the form 
$$
v_k=\alpha_{k} D_+^k(z), k=0,\ldots,s,
$$
for some unknown constants $\alpha_{k} \in \mathbb{C}.$

In order the vectors  form  a   basis of $\mathcal{V}_s$,  the following two conditions must be satisfied:
$$
D_-(v_k)=k v_{k-1}, D_+ (v_k)=(s-k) v_{k+1},
$$
for all $k=0,\ldots,s.$
Since 
$$
D_-(v_k)=D_-(\alpha_k D_+^k(z))=\alpha_k D_-(D_+^k(z))=\alpha_k k (s-k+1)\,D_+^{\,k-1}(z),
$$
and 
$$
D_-(v_k)=k v_{k-1}=k \alpha_{k-1}D_+^{\,k-1}(z), 
$$
 we obtain the  recurrence equation for  $\alpha_k:$
$$
\alpha_k (s-k+1)=\alpha_{k-1}, \alpha_0=1.
$$
It follows immediately that 
$$
\alpha_k=\frac{1}{s (s-1) \ldots (s-k+1)} \, \alpha_0 =\frac{(s-k)!}{s!}.
$$
Let us make sure that the second relation 
$$
D_+ (v_k)=(s-k)v_{k+1},
$$
also holds.
We have 
\begin{gather*}
D_+ (v_k)=D_+(\alpha_k  D_+^k(v_0))=\frac{(s-k)!}{s!} \frac{s!}{(s-(k+1))!}\frac{(s-(k+1))!}{s!}  D_+^{k+1}(v_0)=(s-k)v_{k+1},
\end{gather*}
as required which ends the proof.
\end{proof}

The theorem  below  determines   a structure of the  $sl_2$-modules $T^*_d$ and $U_d$  up to isomorphism:
\begin{te}\label{main}  The following decompositions hold:
\begin{align*}
&T^*_d \cong \mathcal{V}_{2d} \oplus \mathcal{V}_{2d-4} \oplus \mathcal{V}_{2d-8} \oplus \cdots \oplus \mathcal{V}_{2d-4\left[\frac{d}{2} \right]},\\
&U^*_d \cong l^{(d)}_0 \mathcal{V}_0 \oplus l^{(d)}_1\mathcal{V}_{2} \oplus l^{(d)}_2\mathcal{V}_{4} \oplus \cdots \oplus l^{(d)}_d \mathcal{V}_{2d},
\end{align*} where 
$$
l^{(d)}_k=\begin{cases}  0, \text{if } k>d,\\
\left[\dfrac{d{-}k}{2} \right], \text{ if $k=0,1$,}\\
\left[\dfrac{d-k}{2} \right]+1, \text{ if $k \geq 2$}.\\
  \end{cases}
$$
\end{te}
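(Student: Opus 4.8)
The plan is to prove the first decomposition by a weight (character) computation and then to read off the second one combinatorially. First I would identify $T_d$ with the $d$-th symmetric power of the three-dimensional defining module $\langle x,y,z\rangle$. On this module the operator $\mathcal H=2ie_3$ has the three eigenvalues $2,0,-2$ (these are the eigenvalues of $2ie_3$, since $e_3$ has eigenvalues $0,\pm i$); hence the defining module is isomorphic to $\mathcal V_2$ and admits a weight basis $\xi_+,\xi_0,\xi_-$ of weights $2,0,-2$. Because $\mathcal H$ acts on $T_d$ as a derivation, the monomials $\xi_+^{a}\xi_0^{b}\xi_-^{c}$ with $a+b+c=d$ form a weight basis of $T_d$, the weight of such a monomial being $2a-2c=2(a-c)$. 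Since every finite-dimensional $\mathfrak{sl}_2$-module is self-dual, $T_d^*$ and $T_d$ share the same weight multiplicities, so it suffices to analyse $T_d$.

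Next I would count. The multiplicity of the weight $2m$ in $T_d$ is the number of non-negative triples $(a,b,c)$ with $a+b+c=d$ and $a-c=m$, which equals $\left[\frac{d-|m|}{2}\right]+1$ for $|m|\le d$ and $0$ otherwise. By complete reducibility (cf.\ \cite{FH}) we may write $T_d\cong\bigoplus_{m\ge 0}c_m\mathcal V_{2m}$, and since the standard module $\mathcal V_{2m}$ carries the weights $2m,2m-2,\ldots,-2m$, each exactly once, the multiplicity of the weight $2m$ equals $\sum_{j\ge m}c_j$. Inverting this triangular system by first differences gives $c_m=\left[\frac{d-m}{2}\right]-\left[\frac{d-m-1}{2}\right]$, which is $1$ precisely when $0\le m\le d$ and $d\equiv m\pmod 2$, and $0$ otherwise. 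This is exactly the claimed decomposition of $T_d^*$.

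Finally I would assemble $U_d=T_2\oplus\cdots\oplus T_d$: the multiplicity $l^{(d)}_k$ of $\mathcal V_{2k}$ in $U_d^*$ is the sum over $2\le n\le d$ of its multiplicity in $T_n^*$, and by the previous step that multiplicity is $1$ exactly when $n\ge k$ and $n\equiv k\pmod 2$. Thus $l^{(d)}_k$ equals the number of integers $n$ with $\max(2,k)\le n\le d$ and $n\equiv k\pmod 2$. Evaluating this count yields $\left[\frac{d-k}{2}\right]+1$ when $k\ge 2$ (the value $n=k$ being admissible), it yields $\left[\frac{d-k}{2}\right]$ when $k\in\{0,1\}$ (the smallest admissible $n$ of the correct parity is then $k+2$, since $n=0,1$ lie outside the range of summation), and it is $0$ when $k>d$, which is the asserted formula.

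The engine, namely the single weight count for $T_d$, is entirely routine once $\mathcal H$ has been diagonalised on the defining module. The only point demanding genuine care is the range-and-parity bookkeeping for $U_d$: truncating the direct sum at $T_2$ rather than at $T_0\oplus T_1$ deletes exactly one admissible term in the cases $k=0$ and $k=1$, and this is precisely what removes the ``$+1$'' in the first two branches of the formula for $l^{(d)}_k$. I would verify these two edge cases against the small values $d=2,3,4$ to be sure the off-by-one is handled correctly.
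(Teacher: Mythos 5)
Your argument is correct, and it is worth noting that the paper itself gives no proof of Theorem~\ref{main} to compare against: the author explicitly omits it as ``requiring advanced results of the Lie algebras representation theory.'' Your weight-multiplicity computation supplies that missing proof using only elementary inputs, namely complete reducibility of finite-dimensional $\mathfrak{sl}_2$-modules and the fact that such a module is determined by its weight multiplicities (both available from \cite{FH}). The individual steps all check out: $2ie_3$ has eigenvalues $2,0,-2$ on $\langle x,y,z\rangle$, so the defining module is $\mathcal{V}_2$; since $\mathcal{H}$ acts on $T_d=S^d\langle x,y,z\rangle$ as a derivation, monomial weights add and the weight $2m$ has multiplicity $\left[\frac{d-|m|}{2}\right]+1$; the first-difference inversion gives $c_m=\left[\frac{d-m}{2}\right]-\left[\frac{d-m-1}{2}\right]$, which equals $1$ exactly for $0\le m\le d$ with $m\equiv d\pmod 2$ (the boundary case $d-m=0$ included); and the parity-and-range count for $U_d=T_2\oplus\cdots\oplus T_d$ reproduces $l^{(d)}_k$, with the truncation of the sum at $T_2$ correctly accounting for the absent ``$+1$'' in the branches $k=0,1$. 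Your formula agrees with the paper's worked examples $U_2^*$, $U_3^*$, $U_4^*$. The only point I would ask you to make explicit is the self-duality you invoke to pass from $T_d$ to $T_d^*$: it follows in one line because each $\mathcal{V}_n$ has weights symmetric about zero and there is a unique irreducible of each dimension, but as stated it is an assertion rather than an argument.
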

Since the proof requires some advanced results of the Lie algebras representation theory,  we omit the proof.

\begin{pr}  For small $d$  we have
\begin{align*}
&U_2^*=T^*_2 \cong \mathcal{V}_0 \oplus \mathcal{V}_4,\\
&U_3^*=T^*_2+T^*_3 \cong \mathcal{V}_0 \oplus \mathcal{V}_2 \oplus \mathcal{V}_4 \oplus \mathcal{V}_6,\\
&U_4^*=T^*_2+T^*_3+T^*_4 \cong 2\mathcal{V}_0 \oplus \mathcal{V}_2 \oplus 2\mathcal{V}_4 \oplus \mathcal{V}_6 \oplus \mathcal{V}_8.
\end{align*}
\end{pr}
\begin{pr}{\rm 
Theorem~\ref{main} implies that the invariant of degree one exist only in the case of even  $d$. We   can  write an explicit form for these invariants.
For  any $d=2m$, we consider the element
$$
I_{d}=\sum_{j+k+l=m} \binom{m}{j,k,l} a_{2j,2k,2l}.
$$
It is an invariant if 
 the following conditions hold
$$
E_1(I_d)=E_2(I_d)=E_1(I_d)=0.
$$
Let us prove that 
 $E_1(I_d)=0.$  We have 
\begin{gather*}
E_1(I_d)=\sum_{j+k+l=d} \binom{d}{j,k,l} E_1(a_{2j,2k,2l})=\sum_{j+k+l=d} \binom{d}{j,k,l}(2k \, a_{2j+1,2k-1,2l}-2j\,a_{2j-1,2k+1,2l}).
\end{gather*}
Then
\begin{gather*}
\sum_{j+k+l = n} k\binom{n}{j,k,l}a_{2j+1,2k-1,2l}=
\sum_{\substack{j+k+l = n \\ k > 0}} k\binom{n}{j,k,l}a_{2j+1,2k-1,2l} =\\
 \sum_{\substack{j+k+l = n \\ k > 0}} n\binom{n-1}{j,k-1,l}a_{2j+1,2k-1,2l} \stackrel{s = k-1}{=}
 \sum_{j + s + l = n-1} n \binom{n-1}{j,s,l} a_{2j+1,2s+1,2l}=\\ 
 \stackrel{m = j+1}{=}\sum_{\substack{m + s + l = n \\ m > 0}} n\binom{n-1}{m-1,s,l} a_{2m-1,2s+1,2l} =
 \sum_{m + s + l = n} m\binom{n}{m,s,l} a_{2m-1,2s+1,2l} =\\ 
\stackrel{{\small \begin{array}{c}j = m, \\k = s \end{array}}}{=} \sum_{j + k + l = n} j\binom{n}{j,k,l} a_{2j-1,2k+1,2l}\,.
\end{gather*}

Thus
$$
\sum_{j+k+l = n} k\binom{n}{j,k,l}a_{2j+1,2k-1,2l}=\sum_{j + k + l = n} j\binom{n}{j,k,l} a_{2j-1,2k+1,2l},
$$
and $E_1(I_d)=0.$  
In the same way,  we can show  that $
E_2(I_d)=0
$  and $E_3(I_d)=0.$

For small $d$ we have 
\begin{gather*}
{I}_2=a_{{0,0,2}}+a_{{0,2,0}}+a_{{2,0,0}},
\\
{I}_4=a_{{0,0,4}}+2\,a_{{0,2,2}}+a_{{0,4,0}}+2\,a_{{2,0,2}}+2\,a_{{2,2,0}}+a_{{4,0,0}},\\
{I}_6=3\,a_{{4,0,2}}+3\,a_{{4,2,0}}+a_{{6,0,0}}+3\,a_{{0,4,2}}+a_{{0,6,0}}+3\,a_{{2,0,4}}+6\,a_{{2,2,2}}+3\,a_{{2,4,0}}+a_{{0,0,6}}+3\,a_{{0,2,4}},\\
{I}_8=6\,a_{{4,4,0}}+4\,a_{{6,0,2}}+4\,a_{{6,2,0}}+a_{{8,0,0}}+12\,a_{{2,4,2}}+4\,a_{{2,6,0}}+6\,a_{{4,0,4}}+12\,a_{{4,2,
2}}+a_{{0,8,0}}+4\,a_{{2,0,6}}+\\+12\,a_{{2,2,4}}+a_{{0,0,8}}+4\,a_{{0,2,
6}}+6\,a_{{0,4,4}}+4\,a_{{0,6,2}}
\end{gather*}

}
\end{pr}
Theorem~\ref{main} implies
\begin{te} The following decompositions hold:
$$
\begin{array}{lll}
&(i) & \mathbb{C}[\eta]_d^{SO(3)} \cong \mathbb{C}[l^{(d)}_0 V_0 \oplus l^{(d)}_1V_{2} \oplus l^{(d)}_2V_{4} \oplus\cdots \oplus l^{(d)}_d V_{2d}]^{\mathfrak{sl}_2},\\
&(ii) & \mathbb{C}(\eta)_d^{SO(3)} \cong \mathbb{C}(l^{(d)}_0 V_0 \oplus l^{(d)}_1V_{2} \oplus l^{(d)}_2V_{4} \oplus \cdots \oplus l^{(d)}_d V_{2d})^{\mathfrak{sl}_2}
\end{array}
$$
\end{te}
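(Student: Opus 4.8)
The plan is to combine the two structural facts already in hand. From the isomorphism theorem of Subsection~2.1 we have $\mathbb{C}[\eta]_d^{SO(3)} \cong \mathbb{C}[U_d]^{\mathfrak{sl}_2}$ and $\mathbb{C}(\eta)_d^{SO(3)} \cong \mathbb{C}(U_d)^{\mathfrak{sl}_2}$, while Theorem~\ref{main} supplies the explicit $\mathfrak{sl}_2$-module decomposition of $U_d^{*}$. The only genuinely new ingredient is the elementary principle that the invariant algebra attached to a module depends only on its isomorphism type: if $\varphi \colon W_1 \to W_2$ is an isomorphism of $\mathfrak{sl}_2$-modules, then the dual map extends to an algebra isomorphism $\mathrm{Sym}(W_2^{*}) \cong \mathrm{Sym}(W_1^{*})$ that intertwines the induced derivations $D_{\pm}, H$, hence carries invariants to invariants and, after passing to fields of fractions, rational invariants to rational invariants. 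Consequently $W_1 \cong W_2$ forces $\mathbb{C}[W_1]^{\mathfrak{sl}_2} \cong \mathbb{C}[W_2]^{\mathfrak{sl}_2}$ and $\mathbb{C}(W_1)^{\mathfrak{sl}_2} \cong \mathbb{C}(W_2)^{\mathfrak{sl}_2}$.

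First I would identify the abstract standard module $\mathcal{V}_{2k}$ of Theorem~\ref{main} with the space of binary forms $V_{2k}$ of Example~\ref{binaryF}. Both are $(2k+1)$-dimensional irreducible $\mathfrak{sl}_2$-modules, and since a finite-dimensional irreducible $\mathfrak{sl}_2$-module is determined up to isomorphism by its dimension, we obtain $\mathcal{V}_{2k} \cong V_{2k}$. I would also record the self-duality $V_{2k} \cong V_{2k}^{*}$ of finite-dimensional $\mathfrak{sl}_2$-modules, which makes the distinction between a module and its dual harmless for the present argument.

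Next I would substitute. Theorem~\ref{main} gives $U_d^{*} \cong l_0^{(d)}\mathcal{V}_0 \oplus l_1^{(d)}\mathcal{V}_2 \oplus \cdots \oplus l_d^{(d)}\mathcal{V}_{2d}$, and by the previous paragraph the right-hand side is isomorphic to $\bigl(l_0^{(d)}V_0 \oplus l_1^{(d)}V_2 \oplus \cdots \oplus l_d^{(d)}V_{2d}\bigr)^{*}$. Feeding the resulting $\mathfrak{sl}_2$-isomorphism into the functoriality principle of the first paragraph yields $\mathbb{C}[U_d]^{\mathfrak{sl}_2} \cong \mathbb{C}\bigl[l_0^{(d)}V_0 \oplus \cdots \oplus l_d^{(d)}V_{2d}\bigr]^{\mathfrak{sl}_2}$ together with the analogous rational identity; chaining these with the isomorphisms of Subsection~2.1 produces precisely statements (i) and (ii).

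Since Theorem~\ref{main} is taken as given, I expect no serious obstacle. The one point demanding care is the dual-space bookkeeping: the coordinate functions $a_{j,k,l}$ naturally live in $U_d^{*}$, so one must track consistently whether $U_d$ or $U_d^{*}$ is being decomposed, and verify that the map built from $\varphi^{*}$ genuinely commutes with $D_{\pm}$ and $H$ rather than being merely a linear bijection. Getting this compatibility right is what guarantees that a generating set of one invariant algebra is transported to a generating set of the other.
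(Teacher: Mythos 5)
Your proposal is correct and follows essentially the same route as the paper, which simply states that the result follows from Theorem~\ref{main} combined with the isomorphism $\mathbb{C}[\eta]_d^{SO(3)} \cong \mathbb{C}[U_d]^{\mathfrak{sl}_2}$ of Subsection~2.1, without spelling out the details. You have merely made explicit the three ingredients the paper leaves implicit (functoriality of the invariant algebra under module isomorphism, the classification of irreducible $\mathfrak{sl}_2$-modules by dimension, and self-duality), all of which are standard and correctly handled.
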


Therefore, it implies that the problem of determining   of the algebra 3D geometric polynomial and rational  moment invariants is  equivalent to the problem of 
of determining  of the algebras joint $\mathfrak{sl}_2$-invariants.
It appears to be a very difficult problem in terms of performing calculations and it is quite a challenge to find a minimal  generating set for  $d> 5$.

\subsection{ The algebra of 3D polynomial moment  invariants  $\mathbb{C}[\eta]_2^{SO(3)}$. }\label{ss2}

Let us illustrate the above with the  references to the algebra of  3D polynomial moment  invariants  of order two.
  Since Theorem~\ref{main} implies that   $T_2^* \cong \mathcal{V}_0^* \oplus \mathcal{V}_4^*$, the algebra of 3D polynomial moment  invariants  $\mathbb{C}[\eta]_2^{SO(3)}$
$\mathbb{C}[\eta]_2^{{SO(3)}}$ is equal to the algebra of  $\mathfrak{sl}_2$-invariants  $\mathbb{C}[\mathcal{V}_0(u_0) \oplus \mathcal{V}_4(v_0)]^{\mathfrak{sl}_2},$ where  $u_0, v_0$ are the  lowest weight vectors in the   $T^*_2$-\textit{realizations} of the standard  $\mathfrak{sl}_2$-modules $\mathcal{V}_0$ and $\mathcal{V}_4$.

To find such a realization, firstly we need  to   find the realization  of the standard basis of 
$\mathcal{V}_0$ and  $\mathcal{V}_4$ on  $T^*_2$  and, then, substitute it into the expressions for the generating invariants of the algebra  $\mathbb{C}[\mathcal{V}_0(u_0)\oplus\mathcal{V}_4(v_0)]^{\mathfrak{sl}_2}$. 
Since  $\mathcal{V}_0$ is a trivial ${\mathfrak{sl}_2}$-module, it is enough to find the generating elements of the alebra  $\mathbb{C}[V_4(v_0)]^{\mathfrak{sl}_2}$. But $\mathbb{C}[V_4(v_0)]^{\mathfrak{sl}_2}$ which is isomorphic to the  classical algebra of  invariants  of binary form of degree four
$$
a_{{0}}{x}^{4}+4\,a_{{1}}{x}^{3}y+6\,a_{{2}}{x}^{2}{y}^{2}+4\,a_{{3}}x
{y}^{3}+a_{{4}}{y}^{4}.
$$
It is well-known, that the latter is generated by the following two invariants of degree two and three:
\begin{align*}
&S_1=a_0a_4+3 a_2^2 - 4 a_1a_3,\\
&S_2= a_0a_2a_4 + 2 a_1a_2a_3- a_2^3-  a_0a_3^2 -  a_1^2a_4=\begin{vmatrix} a_0 & a_1 & a_2 \\
a_1 & a_2 & a_3 \\ a_2 & a_3 & a_4 \end{vmatrix}.
\end{align*}
In terms  of the classical invariant theory,  the invariant $S_1$ is called the \textit{apolar invariant} and the invariant $S_2$   is known as the the \textit{catalecticant}  or the \textit{Hankel determinant}.

The six-dimensional $sl_2$-module  $T_2^*$  is generated by the following elements: 
$$
T_2^*=\langle  a_{{0,0,2}},a_{{0,1,1}},a_{{0,2,0}},a_{{1,0,1}},a_{{1,1,0}},a_{{2,0,0}}  \rangle.
$$
The operators $\mathcal{D}_+,\mathcal{D}_-,\mathcal{H}$ act on the basis as follows (see Theorem~\ref{T1}):
\begin{align*}
&\mathcal{D}_+(a_{j,k,l})=i \left(k a_{j+1,k-1,l}-j a_{j-1,k+1,l}\right)+l a_{j+1,k,l-1} -j a_{j-1,k,l+1},\\
&\mathcal{D}_-(a_{j,k,l})=i \left(k a_{j+1,k-1,l}-j a_{j-1,k+1,l}\right)-\left(l a_{j+1,k,l-1} -j a_{j-1,k,l+1} \right),\\
&\mathcal{H}(a_{j,k,l})=2i\left(l a_{j,k+1,l-1}-k a_{j,k-1,l+1} \right).
\end{align*}

The  lowest weight vectors  $u_0, v_0$ of the $\mathfrak{sl}_2$-modules $\mathcal{V}_0(u_0)$ and  $\mathcal{V}_4(v_0)$ are the  solutions of the following two  systems  of linear equations: $\begin{cases}\mathcal{E}_-(z)=0,\\ \mathcal{H}(z)=0 \end{cases}$ and  $ \begin{cases} \mathcal{E}_-(z)=0, \\ \mathcal{H}(z)=4z  \end{cases}$, respectively. Thus, we  obtain
\begin{align*}
&u_0=I_1=a_{{0,0,2}}+a_{{0,2,0}}+a_{{2,0,0}},\\
&v_0=2\,a_{{0,1,1}}+i ( a_{0,0,2}-a_{{0,2,0}} ).
\end{align*}
The element  $u_0$ is already an invariant.

Using Theorem~\ref{sb}, we get the standard basis $\mathcal{V}_4(v_0)$:
\begin{align*}
&v_0=v_0=2\,a_{{0,1,1}}+i ( a_{0,0,2}-a_{{0,2,0}} ),\\
&v_1=\frac{1}{4}D_+(x_0)=ia_{{1,0,1}}+a_{{1,1,0}},\\
&v_2=\frac{1}{12}D_+^2(x_0)=-\frac{i}{3} \left( a_{{0,0,2}}+a_{{0,2,0}}-2\,a_{{2,0,0}} \right) ,\\
&v_3=\frac{1}{24}D_+^3(x_0)=a_{{1,1,0}}-ia_{{1,0,1}},\\
&v_4=\frac{1}{24}D_+^4(x_0)=-2\,a_{{0,1,1}}+i \left( -a_{{0,2,0}}+a_{{0,0,2}} \right) 
\end{align*}

Substituting  $v_i$ for $a_i$   in  $S_1, S_2$ we find  invariants  $I_2$ and $I_3:$

\begin{align*}
I_2=&{a_{{0,0,2}}}^{2}{-}a_{{0,0,2}}a_{{0,2,0}}{-}a_{{0,0,2}}a_{{2,0,0}}+3\,{a_
{{0,1,1}}}^{2}+{a_{{0,2,0}}}^{2}{-}a_{{0,2,0}}a_{{2,0,0}}+3\,{a_{{1,0,1}
}}^{2}+3\,{a_{{1,1,0}}}^{2}+{a_{{2,0,0}}}^{2},\\
I_3=&2\,{a_{{0,0,2}}}^{3}-3\,{a_{{0,0,2}}}^
{2}a_{{2,0,0}}+9\,a_{{0,0,2}}{a_{{0,1,1}}}^{2}-3\,{a_{{0,2,0}}}^{2}a_{
{0,0,2}}+12\,a_{{0,0,2}}a_{{2,0,0}}a_{{0,2,0}}+\\
&+9\,a_{{0,0,2}}{a_{{1,0,
1}}}^{2}-18\,a_{{0,0,2}}{a_{{1,1,0}}}^{2}-3\,a_{{0,0,2}}{a_{{2,0,0}}}^
{2}+9\,{a_{{0,1,1}}}^{2}a_{{0,2,0}}-18\,{a_{{0,1,1}}}^{2}a_{{2,0,0}}+
\\ &+
54\,a_{{0,1,1}}a_{{1,1,0}}a_{{1,0,1}}-3\,{a_{{0,2
,0}}}^{2}a_{{2,0,0}}-18\,a_{{0,2,0}}{a_{{1,0,1}}}^{2}+9\,a_{{0,2,0}}{a
_{{1,1,0}}}^{2}-3\,a_{{0,2,0}}{a_{{2,0,0}}}^{2}+\\
&+9\,{a_{{1,0,1}}}^{2}a_
{{2,0,0}}+9\,{a_{{1,1,0}}}^{2}a_{{2,0,0}}+2\,{a_{{2,0,0}}}^{3}-3\,{a_{{0,0,2}}}^{2}a_{{0,2,0}}+2\,{a_{{0,2,0}}}^{3}.
\end{align*}

Thus, we have proved the following theorem.
 
\begin{te} The algebras of polynomial and rational invariants  $\mathbb{C}[T_2]^{\mathfrak{sl}_2}$, $\mathbb{C}(T_2)^{\mathfrak{sl}_2}$ are generated by the invariants  $I_1, I_2$ and $I_3:$ 
\begin{align*}
&\mathbb{C}[T_2]^{sl_2}=\mathbb{C}[I_1, I_2,I_3], \\
&\mathbb{C}(T_2)^{sl_2}=\mathbb{C}(I_1, I_2,I_3).
\end{align*}
\end{te}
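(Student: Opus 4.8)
The plan is to exploit the decomposition $T_2^*\cong\mathcal{V}_0\oplus\mathcal{V}_4$ supplied by Theorem~\ref{main}, which turns the computation of $\mathbb{C}[T_2]^{\mathfrak{sl}_2}$ into the determination of the joint invariants of one trivial module and one copy of the binary-quartic module. First I would note that $\mathcal{V}_0$ carries the trivial action, so its coordinate $u_0=I_1$ is an invariant that is moreover algebraically independent of the five coordinates of the $\mathcal{V}_4$-summand. Writing $\mathbb{C}[T_2]=\mathbb{C}[u_0,v_0,\ldots,v_4]$ in the adapted basis, every polynomial expands as $P=\sum_n u_0^{\,n}P_n(v_0,\ldots,v_4)$, and since each of the derivations $\mathcal{D}_+,\mathcal{D}_-,\mathcal{H}$ annihilates $u_0$, one has $P\in\mathbb{C}[T_2]^{\mathfrak{sl}_2}$ if and only if every coefficient $P_n$ is invariant. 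This yields the factorization
$$
\mathbb{C}[T_2]^{\mathfrak{sl}_2}=\mathbb{C}[u_0]\otimes_{\mathbb{C}}\mathbb{C}[\mathcal{V}_4]^{\mathfrak{sl}_2}.
$$

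Next I would invoke the classical description of the invariants of a binary quartic. As an $\mathfrak{sl}_2$-module $\mathcal{V}_4$ is precisely the space of binary forms of degree four, so $\mathbb{C}[\mathcal{V}_4]^{\mathfrak{sl}_2}$ is the classical algebra of quartic invariants, freely generated by the degree-two apolar invariant $S_1$ and the degree-three catalecticant $S_2$; in particular $S_1,S_2$ are algebraically independent. Realizing the standard basis $v_0,\ldots,v_4$ of $\mathcal{V}_4$ inside $T_2^*$ through Theorem~\ref{sb} and substituting it into $S_1$ and $S_2$ --- the computation already displayed above --- identifies $S_1$ with $I_2$ and $S_2$ with $I_3$. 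Combining this with the factorization gives
$$
\mathbb{C}[T_2]^{\mathfrak{sl}_2}=\mathbb{C}[I_1]\otimes_{\mathbb{C}}\mathbb{C}[I_2,I_3]=\mathbb{C}[I_1,I_2,I_3],
$$
which settles the polynomial case and at the same time shows that $I_1,I_2,I_3$ are algebraically independent.

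For the rational statement I would pass to fields of fractions and compare transcendence degrees. The algebraic independence just obtained shows $\operatorname{tr.deg}\mathbb{C}(I_1,I_2,I_3)=3$. On the other hand, since $SL(2)$ is connected we have $\mathbb{C}(T_2)^{\mathfrak{sl}_2}=\mathbb{C}(T_2)^{SL(2)}$, whose transcendence degree equals $\dim T_2$ minus the dimension of a generic orbit; as the generic $SL(2)$-stabilizer on $\mathcal{V}_0\oplus\mathcal{V}_4$ is finite (the symmetry group of four generic points of $\mathbb{P}^1$), this degree is $6-3=3$ as well. Thus $\mathbb{C}(I_1,I_2,I_3)\subseteq\mathbb{C}(T_2)^{\mathfrak{sl}_2}$ is an inclusion of fields of equal transcendence degree three, and to promote it to equality I would show that the three invariants separate generic orbits, so that the rational map they define is birational onto its image; equivalently, I would use the classical fact that a generic binary quartic is determined up to $SL(2)$ by the pair $(S_1,S_2)$.

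The step I expect to be the main obstacle is this last one: certifying that $\mathbb{C}(T_2)^{\mathfrak{sl}_2}$ harbours no rational invariant beyond $\mathbb{C}(I_1,I_2,I_3)$. For the polynomial algebra the first fundamental theorem for the binary quartic does everything, but the rational case additionally needs a generic-orbit-separation (Rosenlicht-type) argument, and it is precisely here that the finiteness of the generic stabilizer and the birationality of the quotient must be verified with care rather than merely quoted.
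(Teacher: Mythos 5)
Your proposal is correct and follows essentially the same route as the paper: decompose $T_2^*\cong\mathcal{V}_0\oplus\mathcal{V}_4$, peel off the trivial summand, and reduce to the classical fact that the invariants of a binary quartic are generated by the apolar invariant $S_1$ and the catalecticant $S_2$, with the realization of the standard basis inside $T_2^*$ supplied by Theorem~\ref{sb}. For the rational case the paper relies only on the transcendence-degree count $\dim U_2-3=3$ from Section~4, so your added Rosenlicht-type orbit-separation step (finite generic stabilizer, separation of generic orbits by $(S_1,S_2)$) is a tightening of an argument the paper leaves implicit rather than a genuinely different approach.
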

In  order to obtain  the 3D moment invariant it  is sufficient to replace  $a_{j,k,l}$ by the normalized moments  $\eta_{j,k,l}$ in $I_1, I_2$ and $I_3$.

We should admit that the obtained result is confirmed by the result of \cite{SH}, \cite{LD}, \cite{FSB}
obtained by a different method.

As far as the obtained expressions for the invariants are quite cumbersome, we are interested in finding a simpler representation for them. Let us consider   \textit{the Laplace operator:}
$$
\mathcal{L}=\mathcal{D}_+ \mathcal{D}_-+\mathcal{D}_-\mathcal{D}_++\frac{1}{2} \mathcal{H}^2=E_1^2+E_2^2+E_3^2,
$$
which belongs to the \textit{enveloping algebra} of  $\mathfrak{sl}_2$. It can be proved that  $\mathcal{L}$ commutes  with  the operators $\mathcal{D}_+, \mathcal{D}_-, \mathcal{H}$. Therefore, $\mathcal{L}$ is  \textit{diagonalizable}  on every standard  $\mathfrak{sl}_2$-module. 

Let us express the invariants in terms of  the eigenvectors of the  Laplace operator
 $\mathcal{L}$. The operator $\mathcal{L}$ acts on the  basis of  $T^*_2$  as follows:
\begin{align*}
&\mathcal{L}(a_{{0,0,2}})=-4\,a_{{2,0,0}}+8\,a_{{0,0,2}}-4\,a_{{0,2,0}},\mathcal{L}(a_{{0,1,1}})=12\,a_{{0,1,1}},\\ 
&\mathcal{L}(a_{{0,2,0}})=8\,a_{{0,2,0}}-4\,a_{{2,0,0}}-4\,a_{{0,0,
2}},\mathcal{L}(a_{{1,0,1}})=12\,a_{{1,0,1}},\\
&\mathcal{L}(a_{{1,1,0}})=12\,a_{{1,1,0}},\mathcal{L}(a_{{2,0,0})
}=-4\,a_{{0,2,0}}+8\,a_{{2,0,0}}-4\,a_{{0,0,2}}.
\end{align*}
Since $T^*_2=\mathcal{V}_0(u_0) \oplus \mathcal{V}_4(v_0)$, then there exists one   eigenvector, let us denote it by  $e_0$ , associated  with the zero eigevalue and five  eigenvectors $e_1, e_2, e_3, e_4,e_5 $ associated  with the eigevalue twelve.The eigenvectors could be found by the standard linear algebra algorithm:
\begin{align*}
&e_0=a_{{0,0,2}}+a_{{0,2,0}}+a_{{2,0,0}},\\
&e_1=a_{{0,1,1}},e_2=a_{{0,2,0}}-a_{{0,0,2}},
e_3=a_{{1,0,1}},e_4=a_{{1,1,0}},e_5=a_{{2,0,0}}-a_{{0,0,2}}.
\end{align*}

Then the  invariants  $I_1,I_2$ and $I_3$ are expressed  in a much more compact form:

\begin{align*}
&I_1=e_0,\\
&I_2=3\,e_1^{2}+e_2^{2}-e_5\,e_2+3\,e_3^{2}+
3\,e_4^{2}+e_5^{2},\\
&I_3=9e_1^{2}e_2{-}18e_1^{2}e_5{+}54e_1e_4e_3{+}2e_2^{3}{-}3e_5e_2^{2}{-}18e_3^{2}e_2{+}9e_4^{2}e_2{-}3e_5^{2}e_2{+}9e_3^{2}e_5{+}9e_4^{2}e_5{+}2e_5^{3}.
\end{align*}

\subsection{The algebra of polynomial invariants  $\mathbb{C}[U_3]^{\mathfrak{sl}_2}$ }

We note, that the  case $d=3$ is much more complicated than the  case $d=2$. For $d=3$ we have the following decomposition of the ${\mathfrak{sl}_2}$-module $U_3:$ 
$$
U_3=T_2^* \oplus T_3^*\cong\mathcal{V}_0(v_0)  \oplus \mathcal{V}_2(x_0) \oplus \mathcal{V}_4(y_0) \oplus \mathcal{V}_6(u_0).
$$
Suppose the  $\mathfrak{sl}_2$-modules $\mathcal{V}_0, \mathcal{V}_2, \mathcal{V}_4, \mathcal{V}_6$ are given by their standard bases:
\begin{align*}
&\mathcal{V}_0(v_0)=\langle v_0\rangle,\\
&\mathcal{V}_2(x_0)=\langle x_0,x_1,x_2\rangle,\\
&\mathcal{V}_4(y_0)=\langle y_0,y_1,y_2,y_3,y_4\rangle,\\
&\mathcal{V}_6(u_0)=\langle u_0,u_1,u_2,u_3,u_4, u_5, u_6\rangle.\\
\end{align*}
Proceeding as above, we again find the lowest weight vectors $v_0, x_0, y_0, u_0$ by solving  systems of linear equations. Further, by using Theorem~\ref{sb} we obtain the following realization of  all these ${\mathfrak{sl}_2}$-modules in $U_3:$
\begin{align*}
 &v_{{0}}=a_{{0,0,2}}+a_{{0,2,0}}+a_{{2,0,0}},\\
 &x_{{0}}=a_{{0,0,3}}+a_{{0,2,1}}+a_{{2,0,1}}{-}i \left(a_{{0,1,2}}+a_{{0,3,0}}+a_{{2,1,0}} \right) ,x_{{1}}=a_{{1,0,2}}+a_{{1,2,0}}+
a_{{3,0,0}}, x_{{2}}=-\overline{x}_0,
\\
&y_{{0}}\!=\!2a_{{0,1,1}}{+}i \left(a_{{0,0,2}}{-}a_{{0,2,0}}\right) ,y_{{1}}\!=\!a_{{1,1,0}}{+}ia_{{1,0,1}},y_{{2}}\!=\!\frac{i}{3}\! \left( 2\,a_{2,0,0}{-}\!a_{{0,0,2}}{-}\!a_{{0,2,0}} \right) , y_{{3}}\!=\!\overline{y}_1,  y_{{4}}\!=\!-\overline{y}_0,\\
&u_{{0}}=a_{{0,0,3}}{-}3\,a_{{0,2,1}}+i \left(a_{{0,3,0}}{ -}3\,a_{{0,1,2}}\right) ,u_{{1}}=a_{{1,0,2}}-a_{{1,2,0}}-2\,ia_{{1,1,1}}, \\&u
_{{2}}=\frac{1}{5}(4\,a_{{2,0,1}}-a_{{0,0,3}}-\,a_{{0,2,1}}+i \left( a_{{0,1,2}}+a_{{0,3,0}}-4\,a_{{2,1,0}} \right)) , \\ &u_{{3}}=\frac{1}{5}(2\,a_{{3,0,0}}-3
\,a_{{1,0,2}}-3\,a_{{1,2,0}}), u_{{4}}=-\overline{u}_2 ,  u_{{5}}=\overline{u}_1,u_{{6}}=-\overline{u}_0.
\end{align*}
Here  $\overline{\phantom{a^b}}$ indicates  the complex conjugate.

Recently,
the  minimal generating set of polynomial invariants for the algebra  $\mathbb{C}[\mathcal{V}_2 \oplus \mathcal{V}_4 \oplus \mathcal{V}_6]^{\mathfrak{sl}_2}$ was calculated, see \cite{Olive},   in the symbolic form. The minimal generating set consists of 195 invariants and their degrees grow up to fifteen.  Therefore, a minimal  generating set of polynomial invariants  of the algebra   $\mathbb{C}[T_3]^{\mathfrak{sl}_2}$ consists of  196 invariants. 
These invariants can be calculated explicitly using author's Maple package \cite{Map} or by expanding the transvectants listed in the paper \cite{Olive}.  Below we present only the first thirteen invariants:
\begin{center}
\begin{tabular}{|l|l|l|}
\hline
{\rm deg} & Invariants &\#\\
\hline
1      &  $ v_0$ &1       \\
\hline
2      &   $x_{{0}}x_{{2}}-{x_{{1}}}^{2},
y_{{0}}y_{{4}}-4\,y_{{1}}y_{{3}}+3\,{y_{{2}}}^{2},
u_{{0}}u_{{6}}-6\,u_{{1}}u_{{5}}+15\,u_{{2}}u_{{4}}-10\,{u_{{3}}}^{2}$ &3       \\
\hline
3      &    $y_{{0}}y_{{2}}y_{{4}}-y_{{0}}{y_{{3}}}^{2}-{y_{{1}}}^{2}y_{{4}}+2\,y_{
{1}}y_{{2}}y_{{3}}-{y_{{2}}}^{3},$  &  4   \\

 & ${x_{{0}}}^{2}y_{{4}}-4\,x_{{0}}x_{{1}}y_{{3}}+2\,x_{{0}}x_{{2}}y_{{2}}
+4\,{x_{{1}}}^{2}y_{{2}}-4\,x_{{1}}x_{{2}}y_{{1}}+{x_{{2}}}^{2}y_{{0}},$ & \\

 & $u_{{0}}u_{{4}}y_{{4}}-2\,u_{{0}}u_{{5}}y_{{3}}-4
\,u_{{1}}u_{{3}}y_{{4}}+6\,u_{{1}}u_{{4}}y_{{3}}-2\,u_{{1}}u_{{6}}y_{{
1}}+3\,{u_{{2}}}^{2}y_{{4}}-4\,u_{{2}}u_{{3}}y_{{3}}-$ & \\
&$-9\,u_{{2}}u_{{4}}
y_{{2}}+6\,u_{{2}}u_{{5}}y_{{1}}+u_{{2}}u_{{6}}y_{{0}}+8\,{u_{{3}}}^{2
}y_{{2}}-4\,u_{{3}}u_{{4}}y_{{1}}-4\,u_{{3}}u_{{5}}y_{{0}}+3\,{u_{{4}}
}^{2}y_{{0}}+u_{{0}}u_{{6}}y_{{2}},$ & \\
&$u_{{0}}x_{{2}}y_{{4}}-2\,u_{{1}}x_{{1}}y_{{4}}-4\,u_{{1}}x_{{2}}y_{{3}
}+8\,u_{{2}}x_{{1}}y_{{3}}+6\,u_{{2}}x_{{2}}y_{{
2}}-4\,u_{{3}}x_{{0}}y_{{3}}-12\,u_{{3}}x_{{1}}y_{{2}}-$ & \\&$-4\,u_{{3}}x_{{2
}}y_{{1}}+6\,u_{{4}}x_{{0}}y_{{2}}+8\,u_{{4}}x_{{1}}y_{{1}}+u_{{4}}x_{
{2}}y_{{0}}-4\,u_{{5}}x_{{0}}y_{{1}}-2\,u_{{5}}x_{{1}}y_{{0}}+u_{{6}}x
_{{0}}y_{{0}}+u_{{2}}x_{{0}}y_{{4}}$ & \\
\hline
4& $u_{{0}}{x_{{2}}}^{3}-6\,u_{{1}}x_{{1}}{x_{{2}}}^{2}+3\,u_{{2}}x_{{0}}{
x_{{2}}}^{2}+12\,u_{{2}}{x_{{1}}}^{2}x_{{2}}-12\,u_{{3}}x_{{0}}x_{{1}}
x_{{2}}-8\,u_{{3}}{x_{{1}}}^{3}+$&5 \\
& $+3\,u_{{4}}{x_{{0}}}^{2}x_{{2}}+12\,u_{
{4}}x_{{0}}{x_{{1}}}^{2}-6\,u_{{5}}{x_{{0}}}^{2}x_{{1}}+u_{{6}}{x_{{0}
}}^{3},$& \\
& $u_{{0}}u_{{4}}{x_{{2}}}^{2}-2\,u_{{0}}u_{{5}}x_{{1}}x_{{2}}+u_{{0
}}u_{{6}}{x_{{1}}}^{2}-4\,u_{{1}}u_{{3}}{x_{{2}}}^{2}+6\,u_{{1}}u_{{4
}}x_{{1}}x_{{2}}+2\,u_{{1}}u_{{5}}x_{{0}}x_{{2}}-
$& \\
& $-2\,u_{{1}}u_{{5}}{x_{
{1}}}^{2}-2\,u_{{1}}u_{{6}}x_{{0}}x_{{1}}+3\,{u_{{2}}}^{2}{x_{{2}}}^{2
}-4\,u_{{2}}u_{{3}}x_{{1}}x_{{2}}-8\,u_{{2}}u_{{4}}x_{{0}}x_{{2}}-
u_{{2}}u_{{4}}{x_{{1}}}^{2}+$& \\
& $+6\,u_{{2}}u_{{5}}x_{{0}}x_{{1}}{+}u_{{2}
}u_{{6}}{x_{{0}}}^{2}+6\,{u_{{3}}}^{2}x_{{0}}x_{{2}}{+}2\,{u_{{3}}}^{2}
{x_{{1}}}^{2}{-}4\,u_{{3}}u_{{4}}x_{{0}}x_{{1}}{-}4\,u_{{3}}u_{{5}}{x_{{0}
}}^{2}{+}3\,{u_{{4}}}^{2}{x_{{0}}}^{2},$  & \\
&$u_{{0}}y_{{1}}{y_{{4}}}^{2}-3\,u_{{0}}y_{{2}}y_{{3}}y_{{4}}+2\,u_{{0}}
{y_{{3}}}^{3}-u_{{1}}y_{{0}}{y_{{4}}}^{2}-2\,u_{{1}}y_{{1}}y_{{3}}y_{{
4}}+9\,u_{{1}}{y_{{2}}}^{2}y_{{4}}-6\,u_{{1}}y_{{2}}{y_{{3}}}^{2}+
$& \\
&$+5\,u
_{{2}}y_{{0}}y_{{3}}y_{{4}}-15\,u_{{2}}y_{{1}}y_{{2}}y_{{4}}+10\,u_{{2
}}y_{{1}}{y_{{3}}}^{2}-10\,u_{{3}}y_{{0}}{y_{{3}}}^{2}+10\,u_{{3}}{y_{
{1}}}^{2}y_{{4}}-5\,u_{{4}}y_{{0}}y_{{1}}y_{{4}}+$ & \\
& $+15\,u_{{4}}y_{{0}}y_{
{2}}y_{{3}}-10\,u_{{4}}{y_{{1}}}^{2}y_{{3}}+u_{{5}}{y_{{0}}}^{2}y_{{4}
}+2\,u_{{5}}y_{{0}}y_{{1}}y_{{3}}-9\,u_{{5}}y_{{0}}{y_{{2}}}^{2}+6\,u_
{{5}}{y_{{1}}}^{2}y_{{2}}-u_{{6}}{y_{{0}}}^{2}y_{{3}}+$ & \\
& $+3\,u_{{6}}y_{{0}
}y_{{1}}y_{{2}}-2\,u_{{6}}{y_{{1}}}^{3},$ & \\
& ${x_{{0}}}^{2}y_{{2}}y_{{4}}-{x_{{0}}}^{2}{y_{{3}}}^{2}-2\,x_{{0}}x_{{1
}}y_{{1}}y_{{4}}+2\,x_{{0}}x_{{1}}y_{{2}}y_{{3}}+2\,x_{{0}}x_{{2}}y_{{
1}}y_{{3}}-2\,x_{{0}}x_{{2}}{y_{{2}}}^{2}+{x_{{1}}}^{2}y_{{0}}y_{{4}}-
$ & \\
& $-{x_{{1}}}^{2}{y_{{2}}}^{2}-2\,x_{{1}}x_{{2}}y_{{0}}y_{{3}}+2\,x_{{1}}x
_{{2}}y_{{1}}y_{{2}}+{x_{{2}}}^{2}y_{{0}}y_{{2}}-{x_{{2}}}^{2}{y_{{1}}
}^{2},$ & \\
& $u_{{0}}u_{{2}}u_{{4}}u_{{6}}-u_{{0}}u_{{2}}{u_{{5}}}^{2}-u_{{0}}{u_{{3
}}}^{2}u_{{6}}+2\,u_{{0}}u_{{3}}u_{{4}}u_{{5}}-u_{{0}}{u_{{4}}}^{3}-{u
_{{1}}}^{2}u_{{4}}u_{{6}}+{u_{{1}}}^{2}{u_{{5}}}^{2}+$
 & \\
& $+2\,u_{{1}}u_{{2}}
u_{{3}}u_{{6}}-2\,u_{{1}}u_{{2}}u_{{4}}u_{{5}}-2\,u_{{1}}{u_{{3}}}^{2}
u_{{5}}+2\,u_{{1}}u_{{3}}{u_{{4}}}^{2}-{u_{{2}}}^{3}u_{{6}}+2\,{u_{{2}
}}^{2}u_{{3}}u_{{5}}+$ & \\
& $+{u_{{2}}}^{2}{u_{{4}}}^{2}-3\,u_{{2}}{u_{{3}}}^{2
}u_{{4}}+{u_{{3}}}^{4}.$ &\\
\hline
      \end{tabular}
\end{center}

Substituting the realizations of the standard  $\mathfrak{sl}_2$-modules in the invariants expressions,  we get the explicit expressions for the invariants of the algebra   $\mathbb{C}[T_3]^{\mathfrak{sl}_2}$.
In  order to obtain  the 3D moment invariant it  is sufficient to replace  $a_{j,k,l}$ by the normalized moments  $\eta_{j,k,l}$.
 For example, the 3D geometric moment invariants of low degrees have the form 

\begin{align*}
B_0=&\eta_{{0,0,2}}+\eta_{{0,2,0}}+\eta_{{2,0,0}},\\
B_1=&{\eta_{{0,0,2}}}^{2}{-}\eta_{{0,0,2}}\eta_{{0,2,0}}{-}\eta_{{0,0,2}}\eta_{{2,0,0}}{+}3\,{\eta_
{{0,1,1}}}^{2}{+}{\eta_{{0,2,0}}}^{2}{-}\eta_{{0,2,0}}\eta_{{2,0,0}}+3\,{\eta_{{1,0,1}
}}^{2}+3\,{\eta_{{1,1,0}}}^{2}{+}{\eta_{{2,0,0}}}^{2},\\
B_2=&{\eta_{{0,0,3}}}^{2}+2\,\eta_{{0,0,3}}\eta_{{0,2,1}}+2\,\eta_{{0,0,3}}\eta_{{2,0,1
}}+{\eta_{{0,1,2}}}^{2}+2\,\eta_{{0,1,2}}\eta_{{0,3,0}}+2\,\eta_{{0,1,2}}\eta_{{2,
1,0}}+{\eta_{{0,2,1}}}^{2}+\\
&+2\,\eta_{{0,2,1}}\eta_{{2,0,1}}+{\eta_{{0,3,0}}}^
{2}+2\,\eta_{{0,3,0}}\eta_{{2,1,0}}+{\eta_{{1,0,2}}}^{2}+2\,\eta_{{1,0,2}}\eta_{{1
,2,0}}+2\,\eta_{{1,0,2}}\eta_{{3,0,0}}+{\eta_{{1,2,0}}}^{2}+\\
&+2\,\eta_{{1,2,0}}\eta_
{{3,0,0}}+{\eta_{{2,0,1}}}^{2}+{\eta_{{2,1,0}}}^{2}+{\eta_{{3,0,0}}}^{2},\\
B_3=&{\eta_{{0,0,3}}}^{2}-3\,\eta_{{0,0,3}}\eta_{{0,2,1}}-3\,\eta_{{0,0,3}}\eta_{{2,0,1}}+
6\,{\eta_{{0,1,2}}}^{2}-3\,\eta_{{0,1,2}}\eta_{{0,3,0}}-3\,\eta_{{0,1,2}}\eta_{{2,1,0
}}+6\,{\eta_{{0,2,1}}}^{2}-\\
&{-}3\,\eta_{{0,2,1}}\eta_{{2,0,1}}{+}{\eta_{{0,3,0}}}^{2}{-}3
\,\eta_{{0,3,0}}\eta_{{2,1,0}}{+}6\,{\eta_{{1,0,2}}}^{2}{-}3\,\eta_{{1,0,2}}\eta_{{1,2,0}
}{-}3\,\eta_{{1,0,2}}\eta_{{3,0,0}}+15\,{\eta_{{1,1,1}}}^{2}+\\
&+6\,{\eta_{{1,2,0}}}^{2}
-3\,\eta_{{1,2,0}}\eta_{{3,0,0}}+6\,{\eta_{{2,0,1}}}^{2}+6\,{\eta_{{2,1,0}}}^{2}+{
\eta_{{3,0,0}}}^{2}.
\end{align*}

All of  the 196  invariants  can be obtained in  a similar way as above.
 
In the book \cite{FSB}, the   3D  moment invariants  $\Phi_1, \ldots, \Phi_{13}$ were presented, in particular, the first degree invariant  $\Phi_1$ and the invariants  $\Phi_2, \Phi_4, \Phi_5$  of  degree two. These invariants could be expressed in terms of the invariants $B_0, B_1, B_2, B_3$ as follows:
\begin{align*}
&\Psi_1=B_0,
\Phi_2=\frac{B_0^2+2 B_1}{3}, 
\Phi_4=\frac{3B_2+2 B_3}{5},\Phi_5=B_2.
\end{align*}

The Poincar\'e series of the algebra  $\mathbb{C}[T_3]^{\mathfrak{sl}_2}$ calculated by using  Maple package (see \cite{Pu}) has the form: 
\begin{gather*}
\mathcal{P}(\mathbb{C}[T_3]^{sl_2},z)=\frac{p_{0246}(z)}{(1-z)(1-z^6)(1-z^5)^2(1-z^4)^3(1-z^3)^3(1-z^2)^3}=\\=
1+z+4\,{z}^{2}+8\,{z}^{3}+26\,{z}^{4}+53\,{z}^{5}+146\,{z}^{6}+305\,{z}^{7}+704\,{z}^{8}+1417\,{z}^{9}+\cdots
\end{gather*}
where
\begin{gather*}
p_{0246}(z)={z}^{28}+{z}^{25}+9\,{z}^{24}+13\,{z}^{23}+37\,{z}^{22}+51\,{z}^{21}+
91\,{z}^{20}+119\,{z}^{19}+181\,{z}^{18}+208\,{z}^{17}+\\+277\,{z}^{16}+
283\,{z}^{15}+311\,{z}^{14}+283\,{z}^{13}+277\,{z}^{12}+208\,{z}^{11}+
181\,{z}^{10}+119\,{z}^{9}+91\,{z}^{8}+51\,{z}^{7}+\\+37\,{z}^{6}+13\,{z}
^{5}+9\,{z}^{4}+{z}^{3}+1
\end{gather*}
Therefore, the algebra  $\mathbb{C}[\eta]_3^{\mathfrak{sl}_2}$ consists of one invariant of degree 1, namely $B_0$. Also, there exists four  linearly independent invariants of degree  two, namely  $B_0^2,B_1, B_2, B_3,$  eight linearly independent invariants of degree  three etc.


\section{The algebra of rational invariants  $\mathbb{C}(U_d)^{\mathfrak{sl}_2}$. }


Concidering applications, the rational invariants are more interesting applications
than the polynomial ones. In  the paper  \cite {TST}, a set of  1185 of the 3D rotational moment  invariants up to the sixteenth order was presented. However, these invariants do not form a minimal generating system  and setting a minimal generating system is still remaining an open problem.

In  the following theorem we find the cardinality of a minimal generating  set of the algebra of 3D rational rotation invariants. 
\begin{te}  The number of elements in a minimal generating set of the algebra of the rational invariants  $\mathbb{C}(U_d)^{\mathfrak{sl}_2}, d \geq 2$  is equal to
$$
\binom{d+3}{3}-7.
$$
\end{te}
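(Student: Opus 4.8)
The plan is to separate the question into two parts: the transcendence degree of the invariant field and its rationality. For any finitely generated field extension $K/\mathbb{C}$ the minimal number of generators is at least $\operatorname{trdeg}_{\mathbb{C}} K$, with equality exactly when $K$ is purely transcendental over $\mathbb{C}$ (indeed, $n$ generators of a field of transcendence degree $n$ must be algebraically independent, whence $K=\mathbb{C}(t_1,\dots,t_n)$). As a subfield of the finitely generated field $\mathbb{C}(U_d)$, the field $\mathbb{C}(U_d)^{\mathfrak{sl}_2}$ is itself finitely generated over $\mathbb{C}$; so the theorem is equivalent to proving that $\operatorname{trdeg}_{\mathbb{C}}\mathbb{C}(U_d)^{\mathfrak{sl}_2}=\binom{d+3}{3}-7$ and that this field is rational.

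First I would pin down the transcendence degree. Since $\dim T_k=\binom{k+2}{2}$, the hockey-stick identity gives $\dim U_d=\sum_{k=2}^d\binom{k+2}{2}=\binom{d+3}{3}-\binom{2}{2}-\binom{3}{2}=\binom{d+3}{3}-4$. Passing through the local isomorphism $SO(3)\sim SL(2)$ to the connected group $SL(2)$, Rosenlicht's theorem yields $\operatorname{trdeg}_{\mathbb{C}}\mathbb{C}(U_d)^{\mathfrak{sl}_2}=\dim U_d-\dim(SL(2)\cdot v)$ for generic $v$, so it suffices to show the generic orbit is three-dimensional, i.e. the generic stabilizer is finite. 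This I would verify on a single summand: $U_d$ contains a copy of the binary quartics $\mathcal{V}_4$ (cf. $T_2^*\cong\mathcal{V}_0\oplus\mathcal{V}_4$ and self-duality of $\mathfrak{sl}_2$-modules), and a generic binary quartic, being four general points of $\mathbb{P}^1$, has only the finite Klein four-group of symmetries in $PSL(2)$; since $\operatorname{Stab}(v)$ is contained in the stabilizer of the $\mathcal{V}_4$-component of $v$, it is finite for generic $v$. Hence the generic orbit has dimension $\dim SL(2)=3$ and $\operatorname{trdeg}_{\mathbb{C}}\mathbb{C}(U_d)^{\mathfrak{sl}_2}=\binom{d+3}{3}-4-3=\binom{d+3}{3}-7$.

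The harder half is rationality, and this is where I expect the genuine difficulty. The tool I would use is the no-name lemma of Bogomolov and Katsylo: if $SL(2)$ acts generically freely on an irreducible variety $X$ and $W$ is an $SL(2)$-module, then $\mathbb{C}(X\oplus W)^{SL(2)}$ is purely transcendental over $\mathbb{C}(X)^{SL(2)}$. For $d\ge3$ I would take $X=\mathcal{V}_2\oplus\mathcal{V}_4$ (present in $U_d$) and let $W$ be the sum of all remaining summands; a single application then makes $\mathbb{C}(U_d)^{\mathfrak{sl}_2}$ purely transcendental over $\mathbb{C}(\mathcal{V}_2\oplus\mathcal{V}_4)^{SL(2)}$, the classically rational field of joint invariants of a binary quadratic and a binary quartic. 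For $d=2$ the field is the classical $\mathbb{C}(I_1,I_2,I_3)$ of the quartic together with the scalar invariant from $\mathcal{V}_0$, manifestly rational of transcendence degree $3=\binom{5}{3}-7$.

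The two points that demand care, and which I regard as the crux, are the hypotheses of this last step: that $SL(2)$ acts \emph{generically freely} on $X=\mathcal{V}_2\oplus\mathcal{V}_4$ (so that the nontrivial Klein four-stabilizer of the lone quartic is cut down to the identity once a generic binary quadratic, i.e. a generic pair of points, is adjoined, whereupon the no-name lemma applies), and that the base invariant field $\mathbb{C}(\mathcal{V}_2\oplus\mathcal{V}_4)^{SL(2)}$ is rational. Granting these, the no-name reduction shows $\mathbb{C}(U_d)^{\mathfrak{sl}_2}$ is purely transcendental over $\mathbb{C}$, so by the first paragraph its minimal generating set has exactly $\operatorname{trdeg}_{\mathbb{C}}\mathbb{C}(U_d)^{\mathfrak{sl}_2}=\binom{d+3}{3}-7$ elements, as claimed.
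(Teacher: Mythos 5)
Your proposal is correct in outline and is in fact more complete than the argument the paper gives. The paper's entire proof coincides with the first half of yours: it computes ${\rm tr\,deg}_{\mathbb{C}}\,\mathbb{C}(U_d)^{\mathfrak{sl}_2}=\dim U_d-\dim SO(3)=\binom{d+3}{3}-7$ and then immediately reads this number off as the cardinality of a minimal generating set. You correctly isolate the two points this leaves open. First, the formula ${\rm tr\,deg}=\dim U_d-\dim G$ requires the generic orbit to be three-dimensional; your verification via the $\mathcal{V}_4$-summand of $T_2^*$ (a generic binary quartic has finite stabilizer, hence so does a generic point of $U_d$) supplies exactly the justification the paper omits. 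Second, and more substantively, the minimal number of field generators equals the transcendence degree only when the invariant field is purely transcendental over $\mathbb{C}$; the paper never addresses rationality, while you reduce it to the no-name lemma. So your route subsumes the paper's and repairs its logic.

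Two caveats on your rationality step. The hypothesis of the no-name lemma fails for $SL(2)$ literally as you state it: the element $-I$ acts trivially on every even-degree module, in particular on $X=\mathcal{V}_2\oplus\mathcal{V}_4$, so $SL(2)$ never acts generically freely there. Since every irreducible summand of $U_d$ is some $\mathcal{V}_{2k}$, the whole action factors through $PGL(2)\cong SO(3)$, the invariant fields are unchanged, and $PGL(2)$ does act generically freely on $\mathcal{V}_2\oplus\mathcal{V}_4$ by exactly the argument you sketch (a generic pair of points of $\mathbb{P}^1$ is neither fixed nor swapped by any of the three involutions in the Klein four-group of a generic quartic); the fix is routine but must be made, and the lemma should be applied to the $PGL(2)$-action. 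Second, the rationality of the base field $\mathbb{C}(\mathcal{V}_2\oplus\mathcal{V}_4)^{PGL(2)}$ is asserted rather than proved; it is true (for instance by normalizing the generic quadratic to $xy$ and taking invariants of the residual one-dimensional stabilizer, whose invariant fields are rational), but a reference or a short slice argument is needed to actually close the proof.
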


\begin{proof}

Since the group  $SL(2)$ as an affine variety is three-dimensional one, then,  the transcendence degree of  the field  extension  $\mathbb{C}(U_d)^{SO(3)}/\mathbb{C}$ equals to
$$
{\rm tr\,deg}_{\mathbb{C}} \mathbb{C}(U_d)^{SO(3)}= \dim U_d- \dim SO(3).
$$
Thus, the algebra  $\mathbb{C}(U_d)^{\mathfrak{sl}_2}$ consists of exactly   $\dim U_d- 3$ algebraically independent elements. Taking into account that 
$$
\dim U_d-3 =\sum_{k=2}^d \dim T_d-3=\sum_{k=2}^d \binom{k+2}{2}-3=\binom{d+3}{3}-7,
$$
which is equal to that to be proved.
\end{proof} 
In particular, for $d=2,3$ we have  three and  thirteen invariants, respectively. These results are confirmed by the results in  \cite{FSB}. For $d=2$, it implies  that the algebra $\mathbb{C}(U_2)^{\mathfrak{sl}_2}$ is generated by the invariants $I_1, I_2$ and $ I_3.$ 

 A system of 13 invariants of   the algebra $\mathbb{C}(U_3)^{\mathfrak{sl}_2}$ was presented in  \cite{FSB}.

The authors claim, without proof, that these invariants are \textit{independent}. Below we present another system of thirteen invariants for 
    $\mathbb{C}(U_3)^{\mathfrak{sl}_2}$  and prove that all these invariants form a minimal generating set of the algebra rational invariants$\mathbb{C}(U_3)^{\mathfrak{sl}_2}.$

%

  In the Sect.~3.3  we found an explicit form for each of the  thirteen polynomial invariants of the algebra $\mathbb{C}[U_3]^{\mathfrak{sl}_2}$.
	Though, the expressions for the invariants are quite cumbersome, we  will express them    in terms of  the eigenvectors of the  Laplace operator
 $\mathcal{L}$. The operator $\mathcal{L}$ acts on the  basis of  $T^*_3$  as follows:

\begin{align*}
&\mathcal{L}(a_{{0,0,3}})=-12\,a_{{2,0,1}}+12\,a_{{0,0,3}}-12\,a_{{0,2,1}},\mathcal{L}(a_{{0,1,2}})=20\,a_{{0,1,2}}-4\,a_{{2,1,0}}-4\,a_{{0,3,0}},\\
&\mathcal{L}(a_{{0,2,1}})=20\,a_{{0
,2,1}}-4\,a_{{2,0,1}}-4\,a_{{0,0,3}},\mathcal{L}(a_{{0,3,0}})=-12\,a_{{2,1,0}}+12\,
a_{{0,3,0}}-12\,a_{{0,1,2}},\\
&\mathcal{L}(a_{{1,0,2}})=20\,a_{{1,0,2}}-4\,a_{{3,0,0}}
-4\,a_{{1,2,0}},\mathcal{L}(a_{{1,1,1}})=24\,a_{{1,1,1}},\\
&\mathcal{L}(a_{{1,2,0}})=20\,a_{{1,2,0}
}-4\,a_{{3,0,0}}-4\,a_{{1,0,2}},\mathcal{L}(a_{{2,0,1}})=-4\,a_{{0,2,1}}+20\,a_{{2,0
,1}}-4\,a_{{0,0,3}},\\
& \mathcal{L}(a_{{2,1,0}})=-4\,a_{{0,3,0}}+20\,a_{{2,1,0}}-4\,a_{
{0,1,2}},\mathcal{L}(a_{{3,0,0}})=-12\,a_{{1,2,0}}+12\,a_{{3,0,0}}-12\,a_{{1,0,2}}
\end{align*}

Let us recall that $T^*_3 =  \mathcal{V}_2(y_0)  \oplus \mathcal{V}_6(u_0)$.
Let  $c_1,c_2, c_3$ and $b_1,b_2, b_3,b_4,b_4, b_5, b_7$  denote the eigenvectors of $\mathcal{L}$ in the vector spaces $\mathcal{V}_2(y_0)$ and $\mathcal{V}_6(u_0)$, respectively.
We find the eigenvectors by the standard linear algebra algorithm:
\begin{align*}
&c_{{1}}=a_{{0,0,3}}+a_{{0,2,1}}+a_{{2,0,1}},c_{{2}}=a_{{0
,1,2}}+a_{{0,3,0}}+a_{{2,1,0}},c_{{3}}=a_{{1,0,2}}+a_{{1,2,0}}+a_{{3,0
,0}}\\
&b_{{1}}=a_{{0,0,3}}-3\,a_{{0,2,1}},b_{{2}}=-3\,a_{{0,1,2}}+a_{{0,3,0}},b_{{3}}=a_{{1,1,1}},\\ 
&b_{{4}}=a_{{1,2,0}}-a_{{1,0,2}},b_{{5}}=a_{{2,0,
1}}-a_{{0,2,1}},b_{{6}}=a_{{2,1,0}}-a_{{0,1,2}},b_{{7}}=-3\,a_{{1,0,2}
}+a_{{3,0,0}}.
\end{align*}
The eigenvectors for the spaces  $\mathcal{V}_0(u_0)$ and $\mathcal{V}_4(x_0)$ we already found in Subsect.~\ref{ss2}.
Now, let us express the above thirteen invariants in terms of the eigenvectors.  We have
\begin{gather*}
\mathbi{od}=e_0, \\
\mathbi{dv}_1={c_{{1}}}^{2}+{c_{{2}}}^{2}+{c_{{3}}}^{2},\\
\mathbi{dv}_2=3\,{e_{{1}}}^{2}+{e_{{2}}}^{2}-e_{{5}}e_{{2}}+3\,{e_{{3}}}^{2}+3\,{e_{
{4}}}^{2}+{e_{{5}}}^{2},\\
\mathbi{dv}_3={b_{{1}}}^{2}-3\,b_{{5}}b_{{1}}+{b_{{2}}}^{2}-3\,b_{{6}}b_{{2}}+15\,{b
_{{3}}}^{2}+6\,{b_{{4}}}^{2}-3\,b_{{4}}b_{{7}}+6\,{b_{{5}}}^{2}+6\,{b_
{{6}}}^{2}+{b_{{7}}}^{2},\\
\mathbi{tr}_1=9\,{e_{{1}}}^{2}e_{{2}}-18\,{e_{{1}}}^{2}e_{{5}}+54\,e_{{1}}e_{{4}}e_{
{3}}+2\,{e_{{2}}}^{3}-3\,e_{{5}}{e_{{2}}}^{2}-18\,{e_{{3}}}^{2}e_{{2}}
+9\,{e_{{4}}}^{2}e_{{2}}-3\,{e_{{5}}}^{2}e_{{2}}+9\,{e_{{3}}}^{2}e_{{5
}}+\\+9\,{e_{{4}}}^{2}e_{{5}}+2\,{e_{{5}}}^{3},\\
\mathbi{tr}_2={c_{{1}}}^{2}e_{{2}}+{c_{{1}}}^{2}e_{{5}}-6\,e_{{1}}c_{{1}}c_{{2}}-6\,
e_{{3}}c_{{3}}c_{{1}}-2\,{c_{{2}}}^{2}e_{{2}}+{c_{{2}}}^{2}e_{{5}}-6\,
e_{{4}}c_{{3}}c_{{2}}+{c_{{3}}}^{2}e_{{2}}-2\,{c_{{3}}}^{2}e_{{5}},\\
\mathbi{tr}_3=2\,{b_{{1}}}^{2}e_{{2}}+2\,{b_{{1}}}^{2}e_{{5}}+3\,e_{{1}}b_{{2}}b_{{1
}}+60\,e_{{4}}b_{{3}}b_{{1}}+3\,e_{{3}}b_{{1}}b_{{4}}-21\,b_{{1}}b_{{5
}}e_{{2}}+9\,b_{{1}}b_{{5}}e_{{5}}+\\
+3\,e_{{1}}b_{{6}}b_{{1}}+3\,e_{{3}}
b_{{1}}b_{{7}}-4\,{b_{{2}}}^{2}e_{{2}}+2\,{b_{{2}}}^{2}e_{{5}}+60\,b_{
{2}}b_{{3}}e_{{3}}-12\,e_{{4}}b_{{2}}b_{{4}}+3\,e_{{1}}b_{{2}}b_{{5}}+
12\,b_{{2}}b_{{6}}e_{{2}}+\\+9\,b_{{2}}b_{{6}}e_{{5}}+3\,e_{{4}}b_{{2}}b_
{{7}}-90\,b_{{3}}e_{{1}}b_{{4}}-90\,e_{{4}}b_{{3}}b_{{5}}-90\,b_{{3}}b
_{{6}}e_{{3}}+60\,b_{{3}}e_{{1}}b_{{7}}-18\,{b_{{4}}}^{2}e_{{2}}-9\,{b
_{{4}}}^{2}e_{{5}}+\\
+63\,e_{{3}}b_{{5}}b_{{4}}-27\,e_{{4}}b_{{6}}b_{{4}}
+9\,b_{{4}}b_{{7}}e_{{2}}+12\,b_{{4}}b_{{7}}e_{{5}}+27\,{b_{{5}}}^{2}e
_{{2}}-18\,{b_{{5}}}^{2}e_{{5}}-72\,e_{{1}}b_{{6}}b_{{5}}-12\,e_{{3}}b
_{{5}}b_{{7}}-\\-9\,{b_{{6}}}^{2}e_{{2}}-18\,{b_{{6}}}^{2}e_{{5}}-12\,e_{
{4}}b_{{6}}b_{{7}}+2\,{b_{{7}}}^{2}e_{{2}}-4\,{b_{{7}}}^{2}e_{{5}},\\
\mathbi{tr}_4=c_{{1}}b_{{1}}e_{{2}}+c_{{1}}b_{{1}}e_{{5}}+2\,e_{{1}}c_{{2}}b_{{1}}+2
\,e_{{3}}c_{{3}}b_{{1}}+2\,e_{{1}}c_{{1}}b_{{2}}-2\,c_{{2}}b_{{2}}e_{{
2}}+c_{{2}}b_{{2}}e_{{5}}+2\,e_{{4}}c_{{3}}b_{{2}}-\\
-10\,e_{{4}}b_{{3}}c
_{{1}}-10\,b_{{3}}c_{{2}}e_{{3}}-10\,b_{{3}}e_{{1}}c_{{3}}+2\,e_{{3}}c
_{{1}}b_{{4}}-8\,e_{{4}}c_{{2}}b_{{4}}-4\,b_{{4}}c_{{3}}e_{{2}}+3\,b_{
{4}}c_{{3}}e_{{5}}+b_{{5}}c_{{1}}e_{{2}}-\\
-4\,b_{{5}}c_{{1}}e_{{5}}+2\,e
_{{1}}c_{{2}}b_{{5}}-8\,e_{{3}}c_{{3}}b_{{5}}+2\,e_{{1}}c_{{1}}b_{{6}}
+3\,b_{{6}}c_{{2}}e_{{2}}-4\,b_{{6}}c_{{2}}e_{{5}}-\\-8\,e_{{4}}c_{{3}}b_
{{6}}+2\,e_{{3}}c_{{1}}b_{{7}}+2\,e_{{4}}c_{{2}}b_{{7}}+c_{{3}}b_{{7}}
e_{{2}}-2\,c_{{3}}b_{{7}}e_{{5}},\\
\mathbi{ch}_1=2\,b_{{1}}{c_{{1}}}^{3}-3\,{c_{{2}}}^{2}c_{{1}}b_{{1}}-3\,b_{{1}}{c_{{
3}}}^{2}c_{{1}}-3\,c_{{2}}b_{{2}}{c_{{1}}}^{2}+2\,b_{{2}}{c_{{2}}}^{3}
-3\,b_{{2}}{c_{{3}}}^{2}c_{{2}}+30\,b_{{3}}c_{{2}}c_{{3}}c_{{1}}-\\
-3\,{c
_{{1}}}^{2}b_{{4}}c_{{3}}+12\,{c_{{2}}}^{2}b_{{4}}c_{{3}}-3\,b_{{4}}{c
_{{3}}}^{3}-3\,b_{{5}}{c_{{1}}}^{3}-3\,{c_{{2}}}^{2}c_{{1}}b_{{5}}+12
\,b_{{5}}{c_{{3}}}^{2}c_{{1}}-3\,b_{{6}}c_{{2}}{c_{{1}}}^{2}-3\,b_{{6}
}{c_{{2}}}^{3}+\\+12\,b_{{6}}{c_{{3}}}^{2}c_{{2}}-3\,{c_{{1}}}^{2}c_{{3}}
b_{{7}}-3\,{c_{{2}}}^{2}c_{{3}}b_{{7}}+2\,{c_{{3}}}^{3}b_{{7}},
\end{gather*}
\begin{gather*}
\mathbi{ch}_2={b_{{1}}}^{2}{c_{{1}}}^{2}-3\,{c_{{2}}}^{2}{b_{{1}}}^{2}-3\,{b_{{1}}}^
{2}{c_{{3}}}^{2}-2\,c_{{2}}b_{{2}}c_{{1}}b_{{1}}-40\,b_{{3}}c_{{2}}c_{
{3}}b_{{1}}-2\,b_{{1}}b_{{4}}c_{{3}}c_{{1}}-3\,b_{{1}}b_{{5}}{c_{{1}}}
^{2}+\\
+19\,{c_{{2}}}^{2}b_{{1}}b_{{5}}-b_{{1}}{c_{{3}}}^{2}b_{{5}}-2\,b_
{{6}}c_{{2}}c_{{1}}b_{{1}}-2\,b_{{1}}c_{{3}}b_{{7}}c_{{1}}-3\,{b_{{2}}
}^{2}{c_{{1}}}^{2}+{b_{{2}}}^{2}{c_{{2}}}^{2}-3\,{b_{{2}}}^{2}{c_{{3}}
}^{2}-40\,b_{{3}}b_{{2}}c_{{3}}c_{{1}}+\\
+8\,b_{{2}}b_{{4}}c_{{3}}c_{{2}}
-2\,c_{{2}}b_{{2}}c_{{1}}b_{{5}}+19\,b_{{6}}b_{{2}}{c_{{1}}}^{2}-3\,b_
{{2}}b_{{6}}{c_{{2}}}^{2}-b_{{2}}{c_{{3}}}^{2}b_{{6}}-2\,b_{{2}}c_{{3}
}b_{{7}}c_{{2}}-25\,{b_{{3}}}^{2}{c_{{1}}}^{2}-\\
-25\,{b_{{3}}}^{2}{c_{{2
}}}^{2}-25\,{b_{{3}}}^{2}{c_{{3}}}^{2}+60\,b_{{3}}c_{{2}}b_{{4}}c_{{1}
}+60\,b_{{3}}c_{{2}}c_{{3}}b_{{5}}+60\,b_{{3}}b_{{6}}c_{{3}}c_{{1}}-40
\,b_{{3}}c_{{2}}b_{{7}}c_{{1}}-28\,{b_{{4}}}^{2}{c_{{1}}}^{2}+\\
+2\,{c_{{
2}}}^{2}{b_{{4}}}^{2}-4\,{b_{{4}}}^{2}{c_{{3}}}^{2}-42\,b_{{5}}b_{{4}}
c_{{3}}c_{{1}}+18\,b_{{6}}b_{{4}}c_{{3}}c_{{2}}+19\,b_{{4}}b_{{7}}{c_{
{1}}}^{2}-{c_{{2}}}^{2}b_{{4}}b_{{7}}-3\,b_{{4}}{c_{{3}}}^{2}b_{{7}}-4
\,{b_{{5}}}^{2}{c_{{1}}}^{2}-\\
-28\,{c_{{2}}}^{2}{b_{{5}}}^{2}+2\,{b_{{5}
}}^{2}{c_{{3}}}^{2}+48\,b_{{6}}c_{{2}}b_{{5}}c_{{1}}+8\,b_{{5}}c_{{3}}
b_{{7}}c_{{1}}-28\,{b_{{6}}}^{2}{c_{{1}}}^{2}-4\,{b_{{6}}}^{2}{c_{{2}}
}^{2}+2\,{b_{{6}}}^{2}{c_{{3}}}^{2}+\\+8\,b_{{6}}c_{{3}}b_{{7}}c_{{2}}-3
\,{b_{{7}}}^{2}{c_{{1}}}^{2}-3\,{c_{{2}}}^{2}{b_{{7}}}^{2}+{c_{{3}}}^{
2}{b_{{7}}}^{2},
\end{gather*}
\begin{align*}
\mathbi{ch}_3=&b_{{1}}{e_{{1}}}^{2}e_{{4}}-b_{{1}}e_{{1}}e_{{2}}e_{{3}}+b_{{1}}e_{{1}
}e_{{3}}e_{{5}}-b_{{1}}{e_{{3}}}^{2}e_{{4}}-b_{{2}}{e_{{1}}}^{2}e_{{3}
}-b_{{2}}e_{{1}}e_{{4}}e_{{5}}+b_{{2}}e_{{3}}{e_{{4}}}^{2}-\\
&-b_{{3}}{e_{{2}}}^{2}e_{{5}}+b_{{3}}e_{{2}}{e_{{4}}}^{
2}+b_{{3}}e_{{2}}{e_{{5}}}^{2}+b_{{3}}{e_{{3}}}^{2}e_{{5}}-b_{{3}}{e_{
{4}}}^{2}e_{{5}}+b_{{4}}{e_{{1}}}^{3}+b_{{4}}e_{{1}}e_{{2}}e_{{5}}-2\,
b_{{4}}e_{{1}}{e_{{3}}}^{2}+b_{{4}}e_{{1}}{e_{{4}}}^{2}-\\
&-b_{{4}}e_{{1}}
{e_{{5}}}^{2}-2\,b_{{4}}e_{{2}}e_{{3}}e_{{4}}+b_{{4}}e_{{3}}e_{{4}}e_{
{5}}-2\,b_{{5}}{e_{{1}}}^{2}e_{{4}}+b_{{5}}e_{{1}}e_{{2}}e_{{3}}-2\,b_
{{5}}e_{{1}}e_{{3}}e_{{5}}-b_{{5}}e_{{2}}e_{{4}}e_{{5}}+\\
&+b_{{5}}{e_{{3}
}}^{2}e_{{4}}+b_{{5}}{e_{{4}}}^{3}+2\,b_{{6}}{e_{{1}}}^{2}e_{{3}}-b_{{
6}}e_{{1}}e_{{2}}e_{{4}}+2\,b_{{6}}e_{{1}}e_{{4}}e_{{5}}+b_{{6}}{e_{{2
}}}^{2}e_{{3}}-b_{{6}}e_{{2}}e_{{3}}e_{{5}}-b_{{6}}{e_{{3}}}^{3}+\\&+b_{{7}}e_{{1}}{e_{{3}}}^{2}-b_{{7}}e_{{1}}{e_{{
4}}}^{2}+b_{{7}}e_{{2}}e_{{3}}e_{{4}}-b_{{6
}}e_{{3}}{e_{{4}}}^{2},
\end{align*}
\begin{align*}
\mathbi{ch}_4=&2\,{c_{{1}}}^{2}{e_{{2}}}^{2}-5\,{c_{{1}}}^{2}e_{{2}}e_{{5}}+9\,{c_{{1
}}}^{2}{e_{{4}}}^{2}+2\,{c_{{1}}}^{2}{e_{{5}}}^{2}-6\,e_{{1}}c_{{1}}c_
{{2}}e_{{2}}+12\,e_{{1}}c_{{1}}c_{{2}}e_{{5}}-18\,c_{{1}}c_{{2}}e_{{3}
}e_{{4}}-\\&
-18\,c_{{1}}c_{{3}}e_{{1}}e_{{4}}+12\,e_{{3}}c_{{3}}c_{{1}}e_{
{2}}-6\,e_{{3}}c_{{3}}c_{{1}}e_{{5}}-{c_{{2}}}^{2}{e_{{2}}}^{2}+{c_{{2
}}}^{2}e_{{2}}e_{{5}}+9\,{c_{{2}}}^{2}{e_{{3}}}^{2}+2\,{c_{{2}}}^{2}{e
_{{5}}}^{2}-\\&-18\,c_{{2}}c_{{3}}e_{{1}}e_{{3}}-6\,e_{{4}}c_{{3}}c_{{2}}e
_{{2}}-6\,e_{{4}}c_{{3}}c_{{2}}e_{{5}}+9\,{c_{{3}}}^{2}{e_{{1}}}^{2}+2
\,{c_{{3}}}^{2}{e_{{2}}}^{2}+{c_{{3}}}^{2}e_{{2}}e_{{5}}-{c_{{3}}}^{2}
{e_{{5}}}^{2},
\end{align*}
\begin{gather*}
\mathbi{ch}_5={b_{{1}}}^{4}-6\,{b_{{1}}}^{3}b_{{5}}+7\,{b_{{2}}}^{2}{b_{{1}}}^{2}-36
\,b_{{6}}b_{{2}}{b_{{1}}}^{2}-50\,{b_{{3}}}^{2}{b_{{1}}}^{2}+57\,{b_{{
4}}}^{2}{b_{{1}}}^{2}-36\,b_{{4}}b_{{7}}{b_{{1}}}^{2}+{b_{{5}}}^{2}{b_
{{1}}}^{2}+\\+57\,{b_{{6}}}^{2}{b_{{1}}}^{2}+7\,{b_{{7}}}^{2}{b_{{1}}}^{2
}-36\,{b_{{2}}}^{2}b_{{5}}b_{{1}}+60\,b_{{3}}b_{{2}}b_{{4}}b_{{1}}-40
\,b_{{3}}b_{{2}}b_{{7}}b_{{1}}+128\,b_{{6}}b_{{2}}b_{{5}}b_{{1}}+150\,
{b_{{3}}}^{2}b_{{5}}b_{{1}}+\\
+160\,b_{{3}}b_{{6}}b_{{4}}b_{{1}}+60\,b_{{
3}}b_{{6}}b_{{7}}b_{{1}}-206\,{b_{{4}}}^{2}b_{{5}}b_{{1}}+88\,b_{{4}}b
_{{7}}b_{{5}}b_{{1}}+24\,{b_{{5}}}^{3}b_{{1}}-136\,{b_{{6}}}^{2}b_{{5}
}b_{{1}}-6\,{b_{{7}}}^{2}b_{{5}}b_{{1}}+\\
+{b_{{2}}}^{4}-6\,b_{{6}}{b_{{2
}}}^{3}-50\,{b_{{3}}}^{2}{b_{{2}}}^{2}+12\,{b_{{2}}}^{2}{b_{{4}}}^{2}-
6\,{b_{{2}}}^{2}b_{{4}}b_{{7}}+57\,{b_{{2}}}^{2}{b_{{5}}}^{2}+{b_{{6}}
}^{2}{b_{{2}}}^{2}+7\,{b_{{2}}}^{2}{b_{{7}}}^{2}+\\
+150\,{b_{{3}}}^{2}b_{
{2}}b_{{6}}-340\,b_{{3}}b_{{2}}b_{{4}}b_{{5}}+60\,b_{{3}}b_{{2}}b_{{7}
}b_{{5}}+4\,b_{{2}}{b_{{4}}}^{2}b_{{6}}-52\,b_{{2}}b_{{4}}b_{{7}}b_{{6
}}-136\,b_{{6}}b_{{2}}{b_{{5}}}^{2}+\\
+24\,b_{{2}}{b_{{6}}}^{3}-6\,b_{{2}
}{b_{{7}}}^{2}b_{{6}}+625\,{b_{{3}}}^{4}+200\,{b_{{3}}}^{2}{b_{{4}}}^{
2}+150\,{b_{{3}}}^{2}b_{{7}}b_{{4}}+200\,{b_{{3}}}^{2}{b_{{5}}}^{2}+
200\,{b_{{3}}}^{2}{b_{{6}}}^{2}-\\
-50\,{b_{{3}}}^{2}{b_{{7}}}^{2}-240\,b_
{{3}}b_{{6}}b_{{4}}b_{{5}}+160\,b_{{3}}b_{{6}}b_{{7}}b_{{5}}+16\,{b_{{
4}}}^{4}+24\,{b_{{4}}}^{3}b_{{7}}+137\,{b_{{4}}}^{2}{b_{{5}}}^{2}+17\,
{b_{{6}}}^{2}{b_{{4}}}^{2}+\\
+{b_{{4}}}^{2}{b_{{7}}}^{2}-76\,b_{{4}}b_{{7
}}{b_{{5}}}^{2}+4\,{b_{{6}}}^{2}b_{{4}}b_{{7}}-6\,{b_{{7}}}^{3}b_{{4}}
+16\,{b_{{5}}}^{4}+32\,{b_{{6}}}^{2}{b_{{5}}}^{2}+12\,{b_{{7}}}^{2}{b_
{{5}}}^{2}+\\+16\,{b_{{6}}}^{4}+12\,{b_{{6}}}^{2}{b_{{7}}}^{2}+{b_{{7}}}^
{4}
\end{gather*}

\begin{te} The set of the following thirteen invariants 
$$
{od, dv_1,dv_2,dv_3,tr_1,tr_2,tr_3,tr_4, ch_1,ch_2,ch_3,ch_4,ch_5}
$$
is a minimal generating set of the algebra $\mathbb{C}(U_3)^{\mathfrak{sl}_2}.$
\end{te}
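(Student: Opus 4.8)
The plan is to split the statement into its two halves: that the thirteen invariants \emph{generate} the field $\mathbb{C}(U_3)^{\mathfrak{sl}_2}$, and that the generating set is \emph{minimal}. The second half is free. The preceding theorem on the cardinality of a minimal generating set shows that any generating set of $\mathbb{C}(U_3)^{\mathfrak{sl}_2}$ has at least $\binom{6}{3}-7=13$ elements; hence, once a thirteen-element set is shown to generate, no proper subset can generate and the set is automatically minimal. So the whole content reduces to proving $K:=\mathbb{C}(od,dv_1,\ldots,ch_5)=\mathbb{C}(U_3)^{\mathfrak{sl}_2}$. I would record at once that, by the same theorem, $\operatorname{tr.deg}_{\mathbb{C}}\mathbb{C}(U_3)^{\mathfrak{sl}_2}=\dim U_3-3=13$, so $K$ and the full invariant field share the same transcendence degree; the extension $\mathbb{C}(U_3)^{\mathfrak{sl}_2}/K$ is therefore finite, and it remains only to show its degree is $1$.

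First I would prove algebraic independence. Working in the Laplace eigenvector coordinates $e_0;e_1,\dots,e_5;c_1,c_2,c_3;b_1,\dots,b_7$ in which the invariants are written above, I would form the $13\times 16$ Jacobian of $(od,dv_1,\ldots,ch_5)$ and exhibit one explicit point at which some $13\times 13$ minor is nonzero; by semicontinuity the rank is $13$ generically. This is a finite verification with the package \cite{Map}. Besides giving $\operatorname{tr.deg}K=13$, the full-rank Jacobian has a geometric payoff: at a generic $p\in U_3$ the differential of the map $\pi=(od,\dots,ch_5)\colon U_3\dashrightarrow\mathbb{C}^{13}$ has kernel of dimension $16-13=3$, which must coincide with the orbit tangent space $\mathfrak{sl}_2\cdot p$. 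Hence the generic fibre of $\pi$ is a finite union of $SL(2)$-orbits, and $[\mathbb{C}(U_3)^{\mathfrak{sl}_2}:K]$ equals the number of these orbits.

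It then remains to show a generic fibre is a \emph{single} orbit, equivalently that the thirteen invariants separate orbits in general position (Rosenlicht). Here I would pass to the $\mathfrak{so}_3$ picture: the isomorphism $\mathfrak{so}_3\cong\mathfrak{sl}_2$ and the decomposition of Theorem~\ref{main} identify the data with an $SO(3)$-scalar $s=e_0=od$, a vector $\mathbf v=(c_1,c_2,c_3)$, a spin-two tensor $(e_1,\dots,e_5)$ and a spin-three tensor $(b_1,\dots,b_7)$; the invariants $dv_1,dv_2,dv_3$ are the quadratic norms of $\mathbf v$ and of the two tensors, while the $tr_i,ch_j$ are the mixed contractions. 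For a generic configuration with $\mathbf v\neq 0$, a rotation sends $\mathbf v$ to a multiple of the $z$-axis, its length being recovered from $dv_1$ and the scalar from $od$; this reduces the stabiliser to the circle $SO(2)$ of rotations about $z$. The separation problem becomes: do the remaining eleven invariants $dv_2,dv_3,tr_1,\dots,tr_4,ch_1,\dots,ch_5$, restricted to the slice, separate generic $SO(2)$-orbits of the spin-two $\oplus$ spin-three part? Since $SO(2)$ is a torus acting with weights $0,\pm1,\pm2$ and $0,\pm1,\pm2,\pm3$ on the two tensors, its field of rational invariants is generated explicitly by the weight-zero coordinates and weight-balanced ratios and has transcendence degree $12-1=11$, matching the count; I would verify by a second Jacobian computation on the slice, or by exhibiting an explicit rational section of the restricted map onto the torus quotient, that these eleven invariants generate that field birationally.

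The main obstacle is this last step. Algebraic independence alone only bounds the fibres as finite unions of orbits, and, just as $\mathbb{C}(t^2)\subsetneq\mathbb{C}(t)$ shows, even a full-transcendence-degree algebraically independent subset can fail to generate. The real work is therefore to rule out a nontrivial degree in $\mathbb{C}(U_3)^{\mathfrak{sl}_2}/K$, that is, to prove that the reconstruction up to the residual $SO(2)$ is unique. This is precisely where the particular shape of the mixed invariants $tr_i,ch_j$ must enter, and it is the part that cannot be dispatched by a transcendence-degree count alone.
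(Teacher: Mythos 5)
Your Jacobian step is, almost word for word, the paper's entire proof: the paper forms the same $13\times 16$ Jacobian in the same Laplace-eigenvector coordinates $e_0,\dots,e_5,c_1,c_2,c_3,b_1,\dots,b_7$, substitutes one explicit integer point, computes that the numerical rank is $13$, and concludes that the thirteen invariants are algebraically independent. Your remark that minimality is free once generation and the cardinality bound $\binom{6}{3}-7=13$ are in hand also matches the paper's implicit logic. So on everything the paper actually argues, you and the paper coincide exactly.

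The difference is the step you yourself flag as ``the main obstacle,'' and here you are right to worry: the paper opens with ``it is enough to prove that the elements are algebraically independent'' and stops after the rank computation, but algebraic independence of thirteen elements in a field of transcendence degree thirteen only makes $\mathbb{C}(U_3)^{\mathfrak{sl}_2}/K$ a \emph{finite} extension; your $\mathbb{C}(t^2)\subsetneq\mathbb{C}(t)$ example is precisely the issue. The preceding cardinality theorem does not repair this, since its proof likewise only computes a transcendence degree, and ``thirteen algebraically independent invariants'' does not entail ``generating set.'' Your proposed route to close the gap --- Rosenlicht separation of generic orbits, normalising the vector part $(c_1,c_2,c_3)$ to the $z$-axis and analysing the residual $SO(2)$-action on the spin-two and spin-three pieces --- is a sensible plan, but you do not carry it out, so your proposal is incomplete at exactly the point where the paper's proof is also silent. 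In short: you have reproduced the paper's argument in full, correctly identified that it establishes algebraic independence rather than generation, and left the degree-one (orbit-separation) step open, just as the paper does.
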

\begin{proof} It is enough to prove that the elements  are algebraically independed.  Let us consider the Jacobian $13 \times 16$-matrix of the  polynomial set: 
	
	$$ \begin{pmatrix} \displaystyle  \frac{\partial \,{ od} }{\partial e_0}  & \displaystyle \frac{\partial \,{ od}}{\partial e_1}  & \ldots & \displaystyle\frac{\partial \,{ od}}{\partial b_6} & \displaystyle\frac{\partial \,{ od}}{\partial b_7}\\
	\displaystyle \frac{\partial \, dv_1 }{\partial e_0}  & \displaystyle \frac{\partial \,dv_1 }{\partial e_1}  & \ldots & \displaystyle \displaystyle\frac{\partial \,{dv}_1}{\partial b_6} & \displaystyle\frac{\partial \,{ dv}_1}{\partial b_7}\\
	\ldots & \ldots & \ldots & \ldots \\
	\displaystyle  \frac{\partial \,{ ch_4} }{\partial e_0}  & \displaystyle \frac{\partial \,{ ch_4}}{\partial e_1}  & \ldots & \displaystyle\frac{\partial \,{ ch_4}}{\partial b_6} & \displaystyle\frac{\partial \,{ ch_4}}{\partial b_7}\\
	\displaystyle \frac{\partial \, ch_5 }{\partial e_0}  & \displaystyle \frac{\partial \,ch_5 }{\partial e_1}  & \ldots & \displaystyle \displaystyle\frac{\partial \,ch_5}{\partial b_6} & \displaystyle\frac{\partial \, ch_5}{\partial b_7}
		\end{pmatrix}$$
		It  is sufficient to show that 	the rank of the matrix is equal to 13. 
	 	After substituting the following expressions 
	\begin{align*}
	&e_{{0}}=1
,e_{{1}}=1, e_{{2}}=23, e_{{3}}=53, e_{{4}}=97, e_{{5}}=151, b_{{1}}=541, b_{{2}}=661, b_{{3}}=827,\\
&b_{{4}}=1009, b_{{5}}=1193,b_{{6}}=1427, b_{{7}}=1619, c_{{1}}=227, c_{{2}}=311,	c_{{3}}=419,
	\end{align*}
	into the Jacobian matrix, we get a matriх    whose entries are all numbers.	Then, by direct calculation,  we obtain that its rankis equal to thirteen. It implies that the Jacobian matrix has the maximal rank equal to thirteen which proves the theorem.		 
\end{proof}

Applying the same scheme, we can find the minimal generating sets of higher orders, for instance, the minimal generating sets of order four consists of 28 algebraically independent invariants.

\section{Conclusion}
In this article, we reviewed the 3D geometric moment invariants in the terms of the classical invariant theory. 
We divided all invariants into two types by introducing the  notions of the algebras of simultaneous rational and polynomial  rotation invariants $\mathbb{C}[\eta]_d^{SO(3)}$ and  $\mathbb{C}(\eta)_d^{SO(3)}$ up to order $d$  where $\eta$ is a set of normalized moments which are already invariants under the scaling and translations.  
In addition, we proved  that  these algebras  are isomorphic to some classical object of the invariant theory, that is, to the algebras of join  invariants of  binary forms $\mathbb{C}[U_d]^{SL(2)}$ and  $\mathbb{C}(U_d)^{SL(2)}$. 
Further on, we used  Lie infinitesimal method  and reduced the problem of calculating the invariants of the group $SO(3)$ to the equivalent one of calculating the invariants of the  Lie  algebra $\mathfrak{sl}_2.$ From the computational point of view,  it is much more simpler problem dealing with   polynomial derivations. 

In the rational case we count out the cardinality of  the minimal generating set of the algebra  $\mathbb{C}(U_d)^{SL(2)}$  and present such  minimal generating set for
 invariants of the degrees two and three. 
 Also we found  the explicit form of the series of the invariants of the degree one  of an arbitrary order,  which plays an important role in different applications as a low-order moments which are less sensitive to noise than the higher-order ones.

The author  hopes that the results will be useful to the researchers in the fields of image analysis and pattern recognition. 
Though, the geometric moments are not as effective as the orthogonal ones are, the obtained results are of independent theoretical interest.   

As we have seen, in contrast to the 2D  case, there is no satisfactory description of 3D rotational invariants of arbitrary order, and the problem of finding the basis of such  invariants    is hopeless.
In our forthcoming researches, we are going to present another invariant constructions,  which seems to be an effective way of describing of 3D image moments.

\end{document}